\documentclass[11pt]{article}

\usepackage{amsmath}
\usepackage{amssymb}
\usepackage{amsthm}
\usepackage{graphicx}
\usepackage[margin=1in]{geometry}
\usepackage{booktabs}
\usepackage{xcolor}
\usepackage{placeins}
\usepackage[round,authoryear]{natbib}
\usepackage{hyperref}
\hypersetup{
  colorlinks = true,
  linkcolor  = {blue!55!black},
  citecolor  = {blue!55!black},
  urlcolor   = {blue!55!black},
  pdftitle   = {Rank-Conditioned Sample Reuse for the Plackett-Luce Best-of-K Objective},
  pdfauthor  = {Melveena Jolly and Midhun Xavier}
}

\allowdisplaybreaks

\theoremstyle{plain}
\newtheorem{theorem}{Theorem}
\newtheorem{proposition}{Proposition}
\newtheorem{lemma}{Lemma}
\newtheorem{corollary}{Corollary}
\theoremstyle{definition}
\newtheorem{observation}{Observation}
\theoremstyle{remark}
\newtheorem{remark}{Remark}

\newcommand{\E}{\mathbb{E}}

\newcommand{\ind}{\mathbf{1}}
\newcommand{\pool}{\mathrm{pool}}
\newcommand{\WOR}{\mathrm{WOR}}
\newcommand{\Jhat}{\hat{J}}
\newcommand{\JhatQ}{\hat{J}_{Q}}
\newcommand{\JW}{J^{\WOR}_{K}}
\newcommand{\JIID}{J^{\mathrm{iid}}_{K}}
\newcommand{\PLWOR}{\mathrm{PL}\text{-}\WOR}
\DeclareMathOperator{\sgop}{sg}
\DeclareMathOperator{\Gumbel}{Gumbel}
\DeclareMathOperator{\Expo}{Exp}

\title{Rank-Conditioned Sample Reuse\\
for the Plackett--Luce Best-of-$K$ Objective}

\author{%
  Melveena Jolly \\
  \textit{Independent Researcher} \\
  \texttt{melveenajollyk@gmail.com}
  \and
  Midhun Xavier \\
  \textit{Independent Researcher} \\
  \texttt{midhunxavier@outlook.com}
}
\date{July 2026}

\begin{document}
\maketitle

\begin{abstract}
We study the coupled objective $\JW=\E_{S\sim\PLWOR_K}\big[\max_{i\in S}R_i\big]$:
the expected maximum reward of a size-$K$ Plackett--Luce draw \emph{without
replacement}, the law of Gumbel-Top-$K$ / Stochastic Beam Search decoding. This
estimand differs from the conventional i.i.d.\ objective
$\JIID=\E[\max_{i\le K}R_i]$ ($R_i$ independent) targeted by existing sample-reuse
Max@$K$ estimators, and reusing their i.i.d.\ weights under the coupled sampler is
provably biased: we exhibit a closed-form three-item instance with
$\E[\widehat g_{\mathrm{iid}}]=\tfrac45\,\nabla_\theta\JW$ exactly (pass@$K$
\emph{under the coupled sampler} is the binary-reward special case of $\JW$).
Generic joint-score REINFORCE is already unbiased for $\JW$; what it lacks is sample
reuse. Our contribution is to instantiate standard rank-conditioned
Horvitz--Thompson estimation for the $\JW$ subset total: from one Gumbel-Top-$n$
pool ($n>K$) and its observed priority threshold we build an estimator that reuses
all $\binom nK$ embedded $K$-subsets, unbiased for $\JW$ (Theorem~\ref{thm:unbiased})
with an unbiased exact score-function surrogate gradient
(Proposition~\ref{prop:gradient}), together with a reward-sorted Max-specific
dynamic program that collapses the $\binom nK$-term subset sum (each term carrying a
$K!$-cost set probability) \emph{exactly} to a one-dimensional integral. A fixed-$Q$
quadrature evaluation costs $O(n\log n+nKQ)$ arithmetic operations
(Theorem~\ref{thm:tractable}); it is numerically, not algebraically, exact, and we
certify no $\epsilon$-approximation rate. Each nonzero degree-$K$ Horvitz--Thompson
term has finite second moment exactly when $n\ge2K$, and under the same
finite-support interior assumptions the \emph{full} surrogate gradient has finite
second moment whenever $n\ge2K$ (Proposition~\ref{prop:gradmoment}); sharpness at
the gradient level remains open. The construction recovers the classical single-item
priority-sampling estimator at $K=1$. All quantities require only the values and
differentiable computation graphs of the $n{+}1$ drawn items' probabilities, so
finite structured sequence policies sampled by exact Stochastic Beam Search are
covered (Corollary~\ref{cor:sequence}). A certified finite-$Q$ quadrature error
bound and countably infinite support remain open.
\end{abstract}

\section{Introduction}
\label{sec:intro}

Reinforcement learning increasingly optimizes not the \emph{mean} reward but a
\textbf{best-of-$K$} reward, the quantity a deployed system actually realizes when
it samples $K$ candidates and keeps the winner. Two best-of-$K$ estimands must be
separated from the first line. The \textbf{i.i.d.} objective,
\[
\JIID=\E\big[\max_{i\le K}R_i\big],\qquad R_1,\dots,R_K\ \text{i.i.d.\ from }p_\theta,
\]
is the standard Max@$K$ / pass@$K$ target and now an active line of work: PKPO
\citep{pkpo2025}, RSPO \citep{rspo2025}, MaxPO \citep{maxpo2026}, OrderGrad
\citep{ordergrad2026}, and related methods derive unbiased policy gradients for it
by reusing an i.i.d.\ batch. This note is about the \textbf{coupled} objective
\[
\JW=\E_{S\sim\PLWOR_K}\Big[\max_{i\in S}R_i\Big],
\]
the expected maximum of a size-$K$ Plackett--Luce draw \emph{without replacement}:
the estimand realized when a neural combinatorial optimization (NCO) solver decodes
$K$ \emph{distinct} tours in one Gumbel-Top-$K$ / Stochastic Beam Search draw and
returns the shortest, or an LLM system samples $K$ distinct generations in one such
draw and keeps the one a verifier prefers (pass@$K$ \emph{under the coupled sampler}
is the binary-reward special case). The two estimands differ, and estimators tuned
to one are generally biased for the other.

\paragraph{The gap.}
The cited sample-reuse estimators assume the $K$ samples are drawn \textbf{i.i.d.}:
independently, with replacement. Duplicate candidates contribute nothing to the max, so
one important design choice is to bias the sampler \emph{toward distinct candidates on
purpose}: Gumbel-Top-$k$ / Stochastic Beam Search \citep{kool2019sbs}, deduplication before verification,
diverse (penalty-based or deterministic) beam variants, and QMC-coupled batches. These
are \emph{not} one sampling class: only Gumbel-Top-$n$/SBS has the Plackett--Luce /
exponential-race structure our theorems use; deduplicated i.i.d.\ sampling induces a
different conditional law, diverse beam search need not be a PL draw, and QMC batches
preserve marginals with correlated draws. The latter three motivate the broader
correlated-generation problem but are \emph{not covered} by our results.
\citet{shi2020incremental} showed the computational prize on TSP: $100$ distinct
samples can outperform $1280$ i.i.d.\ samples in a duplicate-heavy regime. What is
\emph{missing} is not an unbiased gradient per se (the generic score-function
estimator for a Plackett--Luce draw is already unbiased for any WOR-draw functional,
with the control variates of \citet{gadetsky2020plackett} providing a natural
variance-reduced baseline), but a \textbf{sample-reuse} estimator of $\JW$ that
reuses an $n>K$ WOR pool through closed-form order-statistic (inclusion-probability)
weights, the without-replacement analogue of the i.i.d.\ order-statistic estimators of
PKPO / RSPO / OrderGrad. Under WOR the joint inclusion probability of a $K$-subset is not
the product of marginals, so a naive Horvitz--Thompson reweight of those i.i.d.\
estimators fails; the rank-conditioning of Theorem~\ref{thm:unbiased} is what
restores unbiasedness. MaxPO's limitations section \citep{maxpo2026} explicitly leaves
``settings with correlated generations'' to future work; sampler-level WOR coupling is
one instance of that broad class.

\paragraph{A clarification that prevents a conflation.}
The ``without replacement'' in our setting is \emph{sampler-level, generative}
coupling: the policy itself emits $K$ distinct, negatively correlated items in a
single Gumbel-Top-$K$ / Stochastic Beam Search / Plackett--Luce draw. This is
categorically different from the \emph{within-batch} WOR used by the i.i.d.\ Max@$K$
estimators \citep{pkpo2025,rspo2025,maxpo2026,ordergrad2026,bonw2025}, which first draw
$n$ responses \textbf{i.i.d.\ with replacement} and only then sub-sample size-$K$
subsets of that \emph{already realized} batch without replacement (a $U$-statistic /
variance-reduction device, not a change of sampler). The two induce different
objectives, $\JIID$ there and $\JW$ here: ours is the expectation under a coupled
draw. Without this one-line distinction a reader may read our result as already
solved; it is not.

\paragraph{This note.}
We address that sample-reuse gap for one important coupled sampler: Gumbel-Top-$n$ /
Plackett--Luce sampling with an observed priority threshold and conditionally factorized
inclusion events. We do not cover deduplicated i.i.d.\ sampling, penalty-based diverse
beam search, or QMC coupling. The central claim, stated narrowly: \emph{we instantiate
standard rank-conditioned Horvitz--Thompson estimation
\citep{duffield2007priority,cohen2008bottomk} for the Plackett--Luce best-of-$K$
subset total, and derive a Max-specific dynamic program that removes the combinatorial
subset sum}. Concretely:
\begin{itemize}
\item a rank-conditioned Horvitz--Thompson construction for the Plackett--Luce
best-of-$K$ set functional, with a finite-support unbiasedness proof
(Theorem~\ref{thm:unbiased}) and an unbiased exact score-function policy gradient
(Proposition~\ref{prop:gradient});
\item a Max-specific reward-sorted dynamic program that removes the $\binom nK$-term
subset sum \emph{exactly}, leaving one one-dimensional integral evaluated by
quadrature in $O(n\log n+nKQ)$ arithmetic operations at fixed $Q$
(Theorem~\ref{thm:tractable}); we claim the exact combinatorial collapse plus a
fixed-node arithmetic cost, not an $\epsilon$-approximation rate;
\item second-moment structure: each nonzero degree-$K$ Horvitz--Thompson term has
finite second moment exactly when $n\ge2K$ (Remark~\ref{rmk:variance}), and the
\emph{full} surrogate gradient has finite second moment whenever $n\ge2K$
(Proposition~\ref{prop:gradmoment}), making $n\ge2K$ a certified sufficient
training-stability condition at the gradient level (gradient-level sharpness remains
open); and
\item sharp scope diagnostics: standard conditionally independent multi-start rollouts
need no coupling correction (Observation~\ref{obs:pomo}), while genuine sampler-level
coupling provably does: a closed-form three-item counterexample with
$\E[\widehat g_{\mathrm{iid}}]=\tfrac45\,\nabla_\theta\JW$ exactly
(Observation~\ref{obs:coupling}).
\end{itemize}
The estimator and sampler score need only the values and differentiable computation
graphs of the drawn items' probabilities together with the normalization
$\sum_y p_\theta(y)=1$ (Corollaries~\ref{cor:poolonly} and~\ref{cor:sequence}). Thus
the finite-policy theory covers both flat categorical supports and finite structured
sequence supports sampled using exact SBS. Countably infinite or unbounded
variable-length support remains open (\S\ref{sec:discussion}).
Section~\ref{sec:numerics} supplies numerical certificates, not an application
benchmark. This is a theory-and-certification note; a pre-registered multi-seed NCO
benchmark is deferred to the full paper.

\section{Preliminaries}
\label{sec:prelim}

\paragraph{Objective.}
Items (tours, generations) $1,\dots,M$ form a fixed, finite support and have
full-support policy probabilities $p_i=p_\theta(i)>0$ (so $\sum_i p_i=1$,
$\phi_i=\log p_i$) and fixed, $\theta$-independent rewards $R_i$. State constraints
may remove invalid sequences provided the remaining support is fixed in $\theta$. A
fresh WOR draw of size $K$ is a Plackett--Luce (PL) sequential draw (pick
$\propto p$, renormalize, repeat, $K$ times; written $S\sim\PLWOR_K$); let
$P_{\WOR}(S)$ be the probability its outcome \emph{set} is $S$. The coupled
best-of-$K$ objective of this note is
\begin{equation}
\JW=\E_{S\sim\PLWOR_K}\Big[\max_{i\in S}R_i\Big]
=\sum_{|S|=K}P_{\WOR}(S)\,\max_{i\in S}R_i .
\label{eq:obj}
\end{equation}
Three sizes recur and are kept apart notationally: $K$ is the coupled draw size (one
draw of $K$ distinct items realizes the best-of-$K$), $n\ge K$ is the size of the
larger pool the estimator reuses, and $M$ is the support size. We reserve
$\JIID=\E[\max_{i\le K}R_i]$ ($R_i$ i.i.d.\ from $p_\theta$) for the conventional
objective of the i.i.d.\ Max@$K$ literature; $\JIID\ne\JW$ in general, and every
claim below concerns $\JW$.

\paragraph{The coupled sampler.}
A Gumbel-Top-$n$ / SBS sampler perturbs each item, $G_i=\phi_i+\Gumbel(0,1)$ with
$\phi_i=\log p_i$, draws the pool $=$ top-$n$ by $G$ (with $n\ge K$, and $n>K$ for genuine \emph{sample
reuse}, so the pool contains more than one size-$K$ subset), and records the priority
threshold $\kappa=G_{(n+1)}$. This is the structure the proofs use: independent
Gumbel/exponential clocks, an \emph{observed} $(n{+}1)$-st priority threshold, and
inclusion events that factorize conditionally on that threshold; this is more than
closed-form marginal inclusion probabilities alone. Its threshold-conditioned
(rank-conditioned) per-item inclusion probability is
\begin{equation}
q_i(\kappa)=\Pr(G_i>\kappa)=1-\exp\!\big(-e^{\phi_i-\kappa}\big).
\label{eq:incl}
\end{equation}

\paragraph{The estimator.}
Reusing all $\binom nK$ size-$K$ sub-draws of the pool, corrected by the inclusion
probabilities:
\begingroup\renewcommand{\theHequation}{eqstar}
\begin{equation}
\Jhat(\pool,\kappa)=\sum_{\substack{S\subseteq\pool\\|S|=K}}
\frac{P_{\WOR}(S)}{\prod_{i\in S}q_i(\kappa)}\,\max_{i\in S}R_i .
\tag{$\star$}
\label{eq:star}
\end{equation}
\endgroup
The exact theory uses the score-function surrogate
$\mathcal L=-\big(\Jhat+\sgop(\Jhat)\log f\big)$, whose gradient
$-\big(\nabla_\theta\Jhat+\Jhat\,\nabla_\theta\log f\big)$ is the negative of the ascent
gradient: the pathwise term plus the sampler
score ($f$ is the sampler density and $\sgop$ the stop-gradient). Here the realized
priority $\kappa$ is treated as a \emph{fixed} draw: both $\Jhat$ and $\log f$ are
differentiated only through their explicit $\theta$-dependence (through $P_{\WOR}$ and
$q_i$), not through $\kappa$, exactly as in the reference-measure decomposition of
Proposition~\ref{prop:gradient}.\footnote{Two implementation conventions matter: the
sampler-returned threshold carries a gradient to $\theta$ that must be
stop-gradiented, and the numerics default to a \emph{strict} no-clamp mode.
Appendix~\ref{app:implementation} records both, with the ancillary tests that pin
them.}

\paragraph{Exactness ledger.}
Four objects must be kept distinct. The Horvitz--Thompson identity in
Theorem~\ref{thm:unbiased} is exact; the elimination of the $\binom nK$ subset sum in
Theorem~\ref{thm:tractable} is exact; replacing the remaining integral by $Q$-node
quadrature produces $\JhatQ$, a differentiable numerical approximation whose finite-$Q$
value and gradient errors are certified only on the configurations in
\S\ref{sec:numerics}; and the optional defensive clamps or node dropping add a second,
explicit source of bias when they engage. Thus ``unbiased'' below refers to the ideal
$\Jhat$ and its exact surrogate. The public finite-$Q$ loss is an approximation to that
surrogate; a uniform finite-$Q$ value-and-gradient error bound remains open.

\paragraph{Practical recipe.}
Algorithm~1 summarizes one training step; strict-mode numerics and the
$\kappa$-detach convention are Appendix~\ref{app:implementation}'s.

\begin{figure}[!tbp]
\centering
\fbox{\begin{minipage}{0.94\linewidth}
\textbf{Algorithm~1: one training step of the rank-conditioned estimator.}
\begin{enumerate}\setlength\itemsep{1pt}
\item Run exact SBS / Gumbel-Top-$(n{+}1)$ at width $n{+}1$; retain the top-$n$
pool, the threshold item, and the priority threshold $\kappa=G_{(n+1)}$.
\item Detach the threshold, $\kappa\leftarrow\sgop(\kappa)$, so autograd implements
the fixed-threshold decomposition of Proposition~\ref{prop:gradient}
(Appendix~\ref{app:implementation}).
\item Evaluate the pool rewards and the $n{+}1$ normalized sequence
log-probabilities, keeping their computation graphs.
\item Form $\JhatQ$ by the reward-sorted dynamic program of \S\ref{sec:tractable}
at the $Q$ quadrature nodes, in strict numerical mode.
\item Form the pool-only sampler log-density $\log f$ of
Corollary~\ref{cor:poolonly}.
\item Backpropagate the surrogate $-\big(\JhatQ+\sgop(\JhatQ)\log f\big)$.
\end{enumerate}
The quoted $O(n\log n+nKQ)$ cost covers step~4 only: not SBS decoding, reward
evaluation, or backpropagation through the $n{+}1$ sequence log-probabilities.
\end{minipage}}
\end{figure}

\section{Where the correction is needed (and where it is not)}
\label{sec:where}

Two facts localize the problem; both are proved below
(Observation~\ref{obs:coupling} by a closed-form three-item instance) and verified
numerically in \S\ref{sec:numerics}.

\begin{observation}[independent multi-start needs no correction]
\label{obs:pomo}
Let $R_1,\dots,R_n$ be rewards of rollouts drawn \emph{independently} from possibly
distinct per-start policies $\pi_s(\cdot\,;\theta)$ (as in standard POMO training when
the per-start action draws are conditionally independent given the instance and
deterministic encoder), and take the
independent-start best-of-$K$ objective
\[
\bar J_K=\binom nK^{-1}\!\!\sum_{\substack{A\subseteq[n]\\|A|=K}}\!\!
\E\Big[\max_{s\in A}R_s\Big],\qquad R_s\sim\pi_s\ \text{independently}.
\]
The usual i.i.d.\ Max@$K$ gradient weights are an \emph{unbiased} estimator of
$\nabla_\theta\bar J_K$ for arbitrary per-start marginals; mutual independence, not
identical distribution, is the property required by the proof.
\end{observation}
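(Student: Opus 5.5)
The plan is to fix the form of the ``usual i.i.d.\ Max@$K$ gradient weights'' and then reduce the claim to an exchange of a finite subset sum with expectation. Write the standard estimator in its subset-averaged score-function form:
\[
\widehat g=\binom nK^{-1}\sum_{|A|=K}\Big(\max_{s\in A}R_s\Big)\sum_{s\in A}\nabla_\theta\log\pi_s(y_s;\theta).
\]
The order-statistic weight forms of PKPO / RSPO / OrderGrad are algebraic rearrangements of this same finite sum. Each item's coefficient $\nabla\log\pi_s(y_s)$ collects the rank-dependent count of subsets in which it is the maximum, plus the subsets it merely belongs to. So it suffices to prove unbiasedness for the displayed form. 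Any baseline or leave-one-out variants are handled by the usual zero-mean score argument, which again uses only independence.

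The core step works subset by subset. Fix $A$ with $|A|=K$. The sub-vector $(y_s)_{s\in A}$ has product law $\prod_{s\in A}\pi_s(y_s;\theta)$ by mutual independence. Hence
\[
\E\Big[\max_{s\in A}R_s\Big]=\sum_{y_A}\prod_{s\in A}\pi_s(y_s;\theta)\,\max_{s\in A}R(y_s),
\]
and because the rewards are $\theta$-independent, differentiating this finite sum gives
\[
\nabla_\theta\,\E\Big[\max_{s\in A}R_s\Big]=\E\Big[\max_{s\in A}R_s\cdot\nabla_\theta\log\textstyle\prod_{s\in A}\pi_s(y_s)\Big].
\]
This is the log-derivative trick on a finite support with full-support probabilities, so no interchange issues arise. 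The joint score factorizes additively as $\sum_{s\in A}\nabla\log\pi_s(y_s)$, which is exactly the factorization that independence provides.

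The coordinates outside $A$ need one extra observation. Rollouts $s\notin A$ do not appear in the $A$-term of $\widehat g$ at all. If one prefers the full-batch score, their contribution $\E[\max_A R\cdot\nabla\log\pi_t(y_t)]$ for $t\notin A$ factors as $\E[\max_A R]\,\E[\nabla\log\pi_t]=0$, again by independence together with $\sum_y\pi_t(y)=1$. Averaging over $A$ and using linearity then gives $\E[\widehat g]=\nabla_\theta\bar J_K$. Nowhere is $\pi_s=\pi_{s'}$ used, which proves the heterogeneous-marginal clause.

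The main obstacle is bookkeeping rather than analysis. I must verify that the specific closed-form rank weights of the cited estimators really coincide with the subset-averaged form for arbitrary, non-identical per-start marginals. Their derivations typically invoke exchangeability to write the weights as functions of rank alone. I will check that the rank-only expression arises purely combinatorially, by counting the subsets in which the rank-$r$ item is the maximum (namely $\binom{r-1}{K-1}$ of them) or merely a member. That count is a deterministic identity valid for any realized batch, so it does not depend on identical distribution, and the claim follows.
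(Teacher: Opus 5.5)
Your proposal is correct and takes essentially the paper's route. You factorize the joint score by independence, apply the score-function identity subset by subset (out-of-subset scores vanish in mean), average over the $\binom nK$ subsets, and regroup per rollout into rank weights by a purely combinatorial count that never uses identical marginals. The paper makes that last regrouping explicit as $w_{(i)}=\binom nK^{-1}\big(\ind\{i\ge K\}\binom{i-1}{K-1}R_{(i)}+\sum_{j>i}\binom{j-2}{K-2}R_{(j)}\big)$; the second sum is the ``merely a member'' support contribution that you describe but do not write out.
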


POMO's multi-start \citep{kwon2020pomo} is a \emph{deterministic symmetry enumeration},
not a random WOR draw. Under the standard implementation assumed here---deterministic
shared encoder computation and independent per-start action randomness conditional on
the instance---the rollouts are mutually independent. That independence (not identical
distribution or exchangeability) is all the subset argument needs, even under
heterogeneous starts, so no WOR correction is required in that regime. Shared dropout,
common random numbers, or another stochastic shared mechanism would violate the premise
and must be analyzed separately. This delimits where the correction belongs: it is
\emph{genuine} sampler-level coupling, not distinct-start independence, that breaks the
i.i.d.\ weights.

\begin{proof}
With $a_1,\dots,a_n$ drawn independently from $\pi_s(\cdot\,;\theta)$, the joint law
factorizes, $\log f_\theta(a_{1:n})=\sum_s\log\pi_s(a_s;\theta)$. For a fixed subset
$A$, the score-function identity gives
$\nabla_\theta\E[\max_{s\in A}R_s]
=\E\big[(\max_{s\in A}R_s)\sum_{s\in A}\nabla_\theta\log\pi_s(a_s;\theta)\big]$
(scores of $s\notin A$ drop by independence and $\E[\nabla\log\pi_s]=0$). Averaging over
the $\binom nK$ subsets and swapping the sums, each rollout $s$ receives the coefficient
$w_s=\binom nK^{-1}\sum_{A\ni s}\max_{s'\in A}R_{s'}$. Sorting the rewards ascending,
$R_{(1)}\le\dots\le R_{(n)}$, and splitting the subsets $A\ni s_{(i)}$ by which rank is
their maximum,
\[
w_{(i)}
=\binom nK^{-1}\Big(\ind\{i\ge K\}\tbinom{i-1}{K-1}\,R_{(i)}
\;+\;\sum_{j=i+1}^{n}\tbinom{j-2}{K-2}\,R_{(j)}\Big):
\]
the first term counts the subsets in which $s_{(i)}$ \emph{is} the maximum, the second
the \emph{support} terms from subsets whose maximum is a higher-ranked rollout $(j)$
(choose the remaining $K{-}2$ members from the $j-2$ ranks below $j$ excluding $i$).
These $w_{(i)}$ are exactly the i.i.d.\ Max@$K$ \emph{gradient} weights (PKPO's
transformed rewards); note they are distinct from the order-statistic \emph{value}
coefficients $\binom{i-1}{K-1}\binom nK^{-1}$, which weight $R_{(i)}$ in the estimate of
$\bar J_K$ itself and contain no support terms. Nothing in this computation uses
identical marginals, only mutual independence (the leave-one-out baseline likewise
requires only independence), so the estimator is unbiased for $\nabla_\theta\bar J_K$ at
every degree of start heterogeneity.
\end{proof}

Note $\bar J_K$ is a \emph{different estimand} from the coupled objective $\JW$ of
\S\ref{sec:prelim}, where the $n$ independent draws become a single WOR draw; we keep
the two distinct. WOR coupling breaks the factorization: $\log f_\theta$ acquires the
renormalization (coupling-normalizer) terms, and dropping them is the bias of
Observation~\ref{obs:coupling}.

\begin{observation}[coupling breaks the weights: a closed-form counterexample]
\label{obs:coupling}
Under a coupled WOR sampler (sequential draw over actions without repeats), the same
i.i.d.\ weights are biased. The three-item instance below exhibits the bias in closed
form, $\E[\widehat g_{\mathrm{iid}}]=\tfrac45\,\nabla_\theta\JW$ exactly; numerically,
the relative $L_2$ error is $0.35$--$0.72$ on a small skewed-reward example, and
Figure~\ref{fig:bias} maps the full $(K/n,\text{heterogeneity})$ sweep (median
$\approx0.82$). Thus mutual independence in Observation~\ref{obs:pomo} cannot be
replaced by marginal correctness or exchangeability alone.
\end{observation}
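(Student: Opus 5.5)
The plan is to exhibit the smallest coupled instance by hand and compute both sides in closed form. Take $M=3$ items with softmax logits $\theta\in\mathbb R^3$ evaluated at the uniform point $p=(\tfrac13,\tfrac13,\tfrac13)$. Take rewards $R=(1,0,0)$ and $K=n=2$, so a single PL draw of two distinct items is made.

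With one size-$2$ subset, the i.i.d.\ weights of Observation~\ref{obs:pomo} reduce to $w_s=\max(R_{a_1},R_{a_2})$ for both drawn items. The estimator is therefore
\[
\widehat g_{\mathrm{iid}}=\max(R_{a_1},R_{a_2})\big(\nabla\log p_{a_1}+\nabla\log p_{a_2}\big).
\]
It omits the renormalization score $-\nabla\log(1-p_{a_1})$ of the sequential PL likelihood. That omitted term is exactly the coupling-normalizer flagged after Observation~\ref{obs:pomo}, and it is the source of the bias.

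\textbf{Left side.} Use $\nabla_\theta\log p_i=e_i-p$. The reward factor is $1$ iff item $1\in S$. At the uniform point each set $\{1,j\}$, $j\in\{2,3\}$, has $P_{\WOR}=\tfrac13$. Hence
\[
\E[\widehat g_{\mathrm{iid}}]=\tfrac13\big(2e_1+e_2+e_3-4p\big)=\big(\tfrac29,-\tfrac19,-\tfrac19\big).
\]

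\textbf{Right side.} Write $\JW=\Pr(1\in S)=p_1+p_1\sum_{j\ne1}p_j/(1-p_j)$ and differentiate through the softmax. The needed derivatives are $\partial p_1/\partial\theta_1=p_1(1-p_1)$, $\partial p_j/\partial\theta_1=-p_1p_j$, and $\tfrac{d}{dp}\tfrac{p}{1-p}=(1-p)^{-2}$. At the uniform point these give
\[
\partial_{\theta_1}\JW=\tfrac29\cdot2-\tfrac13\cdot2\cdot\tfrac94\cdot\tfrac19=\tfrac5{18}.
\]

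\textbf{From one coordinate to the full vector.} Both gradient vectors sum to zero, by logit-shift invariance of the softmax. Both are also symmetric under swapping items $2$ and $3$. They therefore lie on the single line spanned by $(2,-1,-1)$, so comparing first coordinates suffices. The ratio is $\tfrac{4/18}{5/18}=\tfrac45$, which gives $\E[\widehat g_{\mathrm{iid}}]=\tfrac45\nabla_\theta\JW$ exactly.

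The remaining claims in the observation are of a different kind. The relative $L_2$ error range and the sweep median are numerical and are deferred to \S\ref{sec:numerics}. The final sentence follows from the computation itself: this instance has correct marginals and is exchangeable, yet the estimator is biased. So the mutual-independence hypothesis of Observation~\ref{obs:pomo} cannot be weakened to those properties.

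I expect the main obstacle to be interpretive rather than computational. The crux is pinning down precisely which "i.i.d.\ weights'' estimator is transplanted to the coupled sampler: the sampler score must be taken as the product of unrenormalized marginals. Once that is fixed, I must make sure the symmetry reduction is legitimate. That step is exactly why the uniform point is used: it makes the bias a scalar multiple of the true gradient rather than a rotation of it.
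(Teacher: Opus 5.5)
Your proposal is correct and follows the paper's proof essentially step for step. It uses the same uniform three-item softmax instance with $R=(1,0,0)$ and $K=n=2$, the same computation $\E[\widehat g_{\mathrm{iid}}]=(\tfrac29,-\tfrac19,-\tfrac19)$ against $\partial_{\theta_1}\JW=\tfrac5{18}$, and the same $2\leftrightarrow3$ symmetry reduction; your sum-to-zero remark makes explicit why one coordinate suffices, and your ordered-draw normalizer $-\nabla\log(1-p_{a_1})$ has the same expectation, $\tfrac1{18}$ in the first coordinate, as the paper's set-level term $\nabla\log P_{\WOR}(S)-\sum_{i\in S}\nabla\log p_i$.
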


\begin{proof}[Closed-form instance]
Take $M=3$ items under a flat softmax at the uniform point ($\theta=0$,
$p_i=\tfrac13$), rewards $R=(1,0,0)$, and one coupled PL draw of size $K=n=2$, so the
draw is the whole pool: the instance isolates the weight bias from sample reuse. Only
item~$1$ carries reward, so
$\JW=\Pr(1\in\text{draw})
=p_1\big[1+\tfrac{p_2}{1-p_2}+\tfrac{p_3}{1-p_3}\big]$, and differentiating through
the softmax at $\theta=0$ gives
\[
\nabla_\theta\JW=\big(\tfrac{5}{18},\,-\tfrac{5}{36},\,-\tfrac{5}{36}\big).
\]
At $n=K$ the iid-grad weights of Observation~\ref{obs:pomo}'s proof assign
\emph{every} drawn item the weight $\max_{i\in S}R_i$ (the only size-$K$ subset is the
whole draw), so the naive update is
$\widehat g_{\mathrm{iid}}=(\max_{i\in S}R_i)\sum_{i\in S}\nabla_\theta\log p_i$. With
$P_{\WOR}(S)=\tfrac13$ for each $2$-subset,
$\nabla_{\theta_a}\log p_i=\delta_{ai}-p_a$, and the subset $\{2,3\}$ carrying zero
reward,
\[
\E\big[\widehat g_{\mathrm{iid}}\big]
=\tfrac13\big(\nabla\log p_1+\nabla\log p_2\big)
+\tfrac13\big(\nabla\log p_1+\nabla\log p_3\big)
=\big(\tfrac{2}{9},\,-\tfrac{1}{9},\,-\tfrac{1}{9}\big)
=\tfrac45\,\nabla_\theta\JW .
\]
The $2\leftrightarrow3$ symmetry forces both vectors onto one axis, so the entire
error is magnitude: cosine exactly $1$, norm short by exactly $\tfrac15$, the
low-heterogeneity signature of Figure~\ref{fig:bias}(a,b). No constant baseline can
repair it, since the whole-draw score $\sum_{i\in S}\nabla_\theta\log p_i$ has zero
mean at this symmetric point; the shortfall is exactly the dropped renormalization
part of the coupled score,
$\E\big[(\max_{i\in S}R_i)\big(\nabla_\theta\log P_{\WOR}(S)
-\sum_{i\in S}\nabla_\theta\log p_i\big)\big]$ ($=\tfrac1{18}$ in the first
component), the term identified before the observation. The corrected
estimator~($\star$) stays unbiased here (Theorem~\ref{thm:unbiased} applies,
$M=3\ge n+1$); since $n<2K$, the instance sits in Remark~\ref{rmk:variance}'s
infinite-second-moment regime: it isolates bias, not variance. All displayed
constants are certified to float64 precision by exact enumeration in the ancillary
suite (\texttt{anc/}).
\end{proof}

\emph{Estimator naming.} Three distinct ``i.i.d.-weight'' objects appear in this note
and are not interchangeable: \textbf{iid-grad} denotes the per-sample gradient weights
$w_{(i)}$ of Observation~\ref{obs:pomo}'s proof (PKPO's transformed rewards, including
the support terms; the arm of the closed-form instance above); \textbf{iid-SubLOO}
denotes iid-grad with the subset-leave-one-out baseline subtracted (the
variance-reduced form; the ``naive'' arm of Observation~\ref{obs:coupling}'s numerical
range and of Figure~\ref{fig:bias}); and \textbf{shared-value score} denotes the
scalar surrogate $V_{\mathrm{iid}}$ (the order-statistic \emph{value} estimate)
multiplied by the whole-pool score (the ``naive'' arm of Table~\ref{tab:variance}; at
$n=K$ it coincides with iid-grad, as in the closed-form instance). All three are
unbiased under independent sampling and generally biased under nontrivial coupling
(with exceptions such as $K=1$, constant rewards, or special symmetries); they are not
the same estimator, so each figure/table names the one it uses.

Beyond the closed-form instance, Appendix~\ref{app:biasmap} reports an illustrative
$(K/n,\text{heterogeneity})$ sweep on one toy geometry ($M=8$, $n=6$;
Figure~\ref{fig:bias}), decomposing the naive weights' error into direction,
magnitude, and absolute-error panels: the naive direction stays aligned everywhere
(cosine $\ge0.926$) while the magnitude is substantially off across most of the plane
(relative $L_2$ median $\approx0.82$). It is an illustration on one reward/policy
geometry, not a general regime map, which is why it sits in an appendix; the
corrected estimator~($\star$) is unbiased everywhere on it.

\section{Unbiasedness (Theorem 1)}
\label{sec:unbiased}

The construction below is an application of \emph{rank-conditioned}
Horvitz--Thompson estimation for bottom-$k$ / priority samples
\citep{cohen2008bottomk,duffield2007priority}: condition on the observed
$(n{+}1)$-st priority (the threshold; hence the synonym \emph{threshold-conditioned})
and reweight by the conditional inclusion probabilities. We do not claim that
conditioning device as new. The contribution is the resulting estimator for the
Plackett--Luce best-of-$K$ set total and the Max-specific collapse in
Section~\ref{sec:tractable}.

\begin{theorem}[unbiasedness]
\label{thm:unbiased}
For $K\le n$ and $M\ge n+1$, the estimator~($\star$) is unbiased:
$\;\E_{(\pool,\kappa)}[\Jhat]=\JW$.
\end{theorem}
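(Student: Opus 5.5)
The estimator is linear in the subset summands, so I will rewrite $\Jhat$ as a sum over \emph{all} $K$-subsets $S$ of the support $[M]$ with the indicator $\ind\{S\subseteq\pool\}$:
\[
\Jhat=\sum_{|S|=K}P_{\WOR}(S)\,\max_{i\in S}R_i\;\frac{\ind\{S\subseteq\pool\}}{\prod_{i\in S}q_i(\kappa)} .
\]
Since $M$ is finite, the outer sum is finite and I may exchange it with the expectation. Comparing with~\eqref{eq:obj}, it then suffices to prove, for every fixed $K$-subset $S$ (with $K\le n$), the single-subset Horvitz--Thompson identity
\[
\E\Big[\ind\{S\subseteq\pool\}\big/\textstyle\prod_{i\in S}q_i(\kappa)\Big]=1 .
\]
The hypothesis $M\ge n+1$ guarantees that $\kappa=G_{(n+1)}$ is a well-defined finite random variable.

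To prove this identity I will use the standard rank-conditioning device of \citet{duffield2007priority,cohen2008bottomk}. Fix $S$ and condition on the Gumbels of the items outside $S$, that is, on $\mathcal G_{-S}=(G_j)_{j\notin S}$. Let $\tau_S$ be the $(n-K+1)$-st largest value among $(G_j)_{j\notin S}$; this is finite because $M-K\ge n-K+1$. The key combinatorial step has two parts.
\begin{itemize}
\item The event $\{S\subseteq\pool\}$ coincides with $\{G_i>\tau_S\ \forall i\in S\}$. If all $K$ items of $S$ beat $\tau_S$, then at most $n-K$ outside items beat them, so $S$ lies in the top $n$. Conversely, if some $i\in S$ falls below $\tau_S$, at least $n-K+1$ outside items, together with the remaining members of $S$, outrank it; a counting argument shows $i$ is then not in the top $n$.
\item On that event, $\kappa=G_{(n+1)}=\tau_S$. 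The top $n$ consists of $S$ plus the $n-K$ largest outside items, so the $(n{+}1)$-st largest overall is the $(n-K+1)$-st largest outside.
\end{itemize}
Ties have probability zero and can be ignored.

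Hence, conditionally on $\mathcal G_{-S}$, the quantity $\ind\{S\subseteq\pool\}/\prod_{i\in S}q_i(\kappa)$ equals $\prod_{i\in S}\ind\{G_i>\tau_S\}\big/\prod_{i\in S}q_i(\tau_S)$, where $\tau_S$ is now fixed. The $G_i$ for $i\in S$ are independent of $\mathcal G_{-S}$ and of each other, and $\Pr(G_i>\tau)=q_i(\tau)$ by~\eqref{eq:incl}. The conditional expectation is therefore exactly $\prod_i q_i(\tau_S)/\prod_i q_i(\tau_S)=1$. Each $q_i(\tau_S)>0$ because $p_i>0$ and $\tau_S<\infty$, so there is no division by zero. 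Taking the outer expectation (tower property) gives the identity, and summing over $S$ gives $\E[\Jhat]=\JW$.

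The main obstacle is the event identity in the first step: checking carefully that $\{S\subseteq\pool\}=\{\min_{i\in S}G_i>\tau_S\}$ and that on it $\kappa$ is a function of $\mathcal G_{-S}$ alone. This is what makes the random threshold in the denominator behave as a constant under the conditioning. I would verify it by a direct counting of ranks. The rest is bookkeeping, and integrability is not an issue because every term is a nonnegative variable with finite conditional expectation.
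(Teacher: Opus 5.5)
Your proposal is correct and is essentially the paper's proof: you condition on the out-of-$S$ variables, identify $\{S\subseteq\pool\}$ with every member of $S$ beating the $(n-K+1)$-st background value, observe that this value is the threshold on that event, and cancel $\prod_{i\in S}q_i$. The only difference is that you work with the Gumbels directly rather than the exponential clocks $T_i=e^{-G_i}$, which is a monotone relabeling.

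One small fix is needed in the converse direction. Take $i=\arg\min_{i'\in S}G_{i'}$, since only then do the other $K-1$ members of $S$ outrank it. For an arbitrary $i\in S$ lying below $\tau_S$, the claim that $i$ drops out of the top $n$ can fail, even though $S\not\subseteq\pool$ still holds. Alternatively, argue as the paper does: if $S\subseteq\pool$, the pool's outside members are the top $n-K$ background items, so $\tau_S$ is the largest non-pool value and lies below all of $S$.
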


\begin{proof}
By linearity, a fixed $K$-subset $S$ contributes to~($\star$) iff $S\subseteq\pool$, so
\[
\E[\Jhat]=\sum_{|S|=K}P_{\WOR}(S)\,\Big(\max_{i\in S}R_i\Big)\cdot
\E\!\left[\frac{\ind\{S\subseteq\pool\}}{\prod_{i\in S}q_i(\kappa)}\right],
\]
and it suffices to prove the \textbf{crux identity}
$\E\big[\ind\{S\subseteq\pool\}/\prod_{i\in S}q_i(\kappa)\big]=1$.

\emph{Exponential-clock form of the sampler.} Write $T_i=e^{-G_i}$; then
$\Pr(T_i>t)=\Pr(G_i<-\log t)=\exp(-e^{\phi_i}t)=e^{-p_it}$, so
$T_i\sim\Expo(p_i)$ independently (continuous since $p_i>0$ under the standing
full-support finite-policy model of \S\ref{sec:prelim}, hence a.s.\ distinct), and the
Gumbel-Top-$n$ sampler becomes an
\textbf{exponential race} \citep{vieira2014gumbel}: top-$n$ by $G$ $=$ the $n$ smallest $T$,
$\tau:=e^{-\kappa}=T_{(n+1)}$, and $q_i(\kappa)=\Pr(T_i<\tau)$.

\emph{Conditioning on the background.} Fix $S$ and condition on the background clocks
$B=\{T_j:j\notin S\}$ with order statistics $b_1<\dots<b_{M-K}$ ($b_{n-K+1}$ exists
because $M\ge n+1$). We claim
$\{S\subseteq\pool\}=\{\max_{i\in S}T_i<b_{n-K+1}\}$, and \emph{on this event}
$\tau=b_{n-K+1}$:
\begin{itemize}
\item[($\Leftarrow$)] If all $K$ clocks of $S$ are below $b_{n-K+1}$, then exactly $n$
clocks lie below it, the $K$ from $S$ plus $b_1,\dots,b_{n-K}$, so
$b_{n-K+1}=T_{(n+1)}=\tau$ and all of $S$ is among the $n$ smallest, i.e.\ in the pool.
\item[($\Rightarrow$)] If $S\subseteq\pool$, the $n-K$ background members of the pool
are the smallest background clocks $b_1,\dots,b_{n-K}$ (the pool is the $n$ smallest
overall), so the smallest non-pool clock is $b_{n-K+1}=\tau$, and every $S$-clock,
being in the pool, is $<\tau=b_{n-K+1}$.
\end{itemize}

\emph{Cancellation.} Given $B$, the weight
$\big(\prod_{i\in S}(1-e^{-p_ib_{n-K+1}})\big)^{-1}$ is $B$-measurable, and the
$S$-clocks are independent of $B$ with $\Pr(T_i<b_{n-K+1})=1-e^{-p_ib_{n-K+1}}$. On
$\{S\subseteq\pool\}$ the threshold is exactly $b_{n-K+1}$, so
$q_i(\kappa)=1-e^{-p_ib_{n-K+1}}$ there (off the event the indicator is $0$). Hence
\[
\E\!\left[\frac{\ind\{S\subseteq\pool\}}{\prod_{i\in S}q_i(\kappa)}\;\middle|\;B\right]
=\frac{\Pr\!\big(\max_{i\in S}T_i<b_{n-K+1}\mid B\big)}
      {\prod_{i\in S}\big(1-e^{-p_ib_{n-K+1}}\big)}
=\frac{\prod_{i\in S}\big(1-e^{-p_ib_{n-K+1}}\big)}
      {\prod_{i\in S}\big(1-e^{-p_ib_{n-K+1}}\big)}=1 .
\]
Taking $\E_B$ gives the crux identity, and therefore $\E[\Jhat]=\JW$.
\end{proof}

\begin{remark}[Finite-variance regime: $n\ge 2K$]
\label{rmk:variance}
Theorem~\ref{thm:unbiased} makes $\Jhat$ \emph{unbiased} for every $K\le n$, but unbiasedness does not by itself give a stable, finite-variance Monte Carlo estimator. The sharp threshold below is proved for the \emph{objective} estimator's Horvitz--Thompson terms; Proposition~\ref{prop:gradmoment} extends the same $n\ge2K$ threshold to a sufficient condition for the \emph{full} surrogate gradient. Each \emph{nonzero} degree-$K$ Horvitz--Thompson term $W_S\max_S R$ has a finite second moment only when the without-replacement pool is at least twice the target set,
\[
n\ge 2K\qquad(\text{equivalently } K/n\le \tfrac12).
\]
Reusing the exponential-race conditioning of the proof, a fixed $K$-subset $S$ carries the Horvitz--Thompson weight $W_S=\ind\{S\subseteq\pool\}/\prod_{i\in S}q_i(\kappa)$. Conditioning on the background clocks $B=\{T_j:j\notin S\}$ (on $\{S\subseteq\pool\}$ the threshold is $\tau=e^{-\kappa}=b_{n-K+1}$, the $(n{-}K{+}1)$-th smallest background clock) and integrating out the $S$-clocks gives the exact identity
\[
\E\!\left[W_S^2\,\middle|\,B\right]=\frac{1}{\prod_{i\in S}\bigl(1-e^{-p_i\,b_{n-K+1}}\bigr)} .
\]
As the threshold $\tau=b_{n-K+1}\to0^+$, the near-exhaustive, high-priority corner where the drawn pool is barely larger than $K$, one has $1-e^{-p_i\tau}\sim p_i\tau$ while the $(n{-}K{+}1)$-th background order statistic has density $\propto\tau^{\,n-K}$, so
\[
\E[W_S^2]\;\sim\;\int_{0^+}\tau^{-K}\cdot\tau^{\,n-K}\,d\tau=\int_{0^+}\tau^{\,n-2K}\,d\tau,
\]
which is \textbf{finite iff $n\ge 2K$ and diverges for $K/n>\tfrac12$} (i.e.\ $n<2K$; the boundary $n=2K-1$ is already log-divergent). Consequently, in the complementary near-exhaustive corner highlighted in Figure~\ref{fig:bias} ($K/n\to1$), $\Jhat$ is unbiased but has \emph{infinite} variance: a Monte-Carlo average over draws is still consistent, yet its sample variance never stabilizes: it is dominated by rare draws with a very high threshold, on which some $q_i(\kappa)\to0$ blow up $1/\prod_{i\in S}q_i$.

Absent exact cancellation across subset terms (e.g.\ zero or perfectly anti-correlated rewards, which can leave the aggregate finite even when individual terms are not), the complete single-draw $\Jhat$ inherits this $n\ge2K$ threshold. This is the variance of the \emph{objective} estimator $\Jhat$; the \emph{gradient} estimator additionally carries the score factor $\nabla_\theta\log f$, and Proposition~\ref{prop:gradmoment} shows the aggregate gradient nonetheless keeps a finite second moment whenever $n\ge2K$.

Numerically, the exact Gumbel-Top-$n$ sampler confirms this boundary: finite $\E[W_S^2]\sim O(10\text{--}10^2)$ for $n\ge2K$ versus realized weights $1/\prod_{i\in S}q_i$ exceeding $10^{4}$ in the divergent $n<2K$ cells (simulation details and the $n\ge4K$ estimability caveat in Appendix~\ref{app:secondmoment}). Three responses are available, with different guarantees:
\begin{itemize}
\item \emph{Over-sample the pool} so that $n\ge 2K$: the sampler controls this directly
by drawing a larger top-$n$ (the extra cost is a single wider draw).
\item \emph{Clip or self-normalize} the Horvitz--Thompson weights
$1/\prod_{i\in S}q_i(\kappa)$: this bounds the realized update but is a different,
biased estimator; no uniform ``small bias'' guarantee is claimed here.
\item \emph{Rao--Blackwellize}: replace the single-draw Horvitz--Thompson estimator with
its conditional expectation over the unordered size-$n$ pool set, the Max@$K$ analogue
of the unordered-set estimator of \citet{kool2020unordered}, which never forms the
$1/\prod q$ weight. This is a promising future route, not a result of this note;
separate companion work studies its definition and analysis.
\end{itemize}
This diagnosis dovetails with Figure~\ref{fig:bias} (Appendix~\ref{app:biasmap}): the variance divergence sits in the \emph{same} near-exhaustive, skewed corner ($K/n\to1$) where the naive i.i.d.\ weights are most wrong, so the correction is most needed exactly where the estimator's own variance is worst. The proved response is to enforce $n\ge2K$, which simultaneously moves the operating point out of that corner. Proposition~\ref{prop:gradmoment} closes the sufficiency half of the corresponding question for the \emph{full gradient}---including both $\nabla_\theta\Jhat$ and the sampler-score factor; whether $n<2K$ can ever leave the aggregated gradient square-integrable beyond degenerate cancellations remains open.
\end{remark}

\begin{remark}
At a high level Theorem~\ref{thm:unbiased} is Horvitz--Thompson estimation on a
population whose sampling units are the $K$-subsets of the pool, and estimating
higher-order (subset-product) quantities under priority / bottom-$k$ sampling has
precedent in the sketching literature: rank-conditioned HT estimation for bottom-$k$
sketches \citep{cohen2008bottomk} and unbiased subgraph (edge-product) estimators under
graph priority sampling \citep{ahmed2017gps}. We therefore do not claim degree-$K$
threshold conditioning per se as new; our contribution is its \emph{instantiation} for
the Plackett--Luce best-of-$K$ set functional: (a) casting the $\JW$ target as a total
over $K$-subsets, (b) the threshold-conditioned factorization that makes the joint
size-$K$ inclusion correction computable for Gumbel-Top-$n$ pools, and (c) the
Max-specific $O(nK)$-per-node collapse of that subset total (\S\ref{sec:tractable}).
At $K=1$ it is the classical single-draw Horvitz--Thompson / priority-sampling
result \citep{duffield2007priority,kool2020unordered}; the conditioning argument covers
the full \textbf{degree-$K$} case. Duffield's priority-sampling
analysis covers only additive subset-sum functionals (degree-$1$ Horvitz--Thompson
estimates and their degree-$2$ pairwise covariances). Mechanistically, our estimator is the coupled
without-replacement analogue of the i.i.d.\ ``probability-of-being-the-maximum''
$U$-statistic of PKPO / RSPO / OrderGrad
\citep{pkpo2025,rspo2025,ordergrad2026}, with unbiasedness restored not by
independence but by inclusion-probability (Horvitz--Thompson) reweighting; RSPO's
closed-form i.i.d.\ probability-of-being-the-max \citep{rspo2025} is the closest
mechanism, and it holds only for independent draws. The gradient inherits this
unbiasedness: the score-function surrogate gradient
$\widehat g=-\nabla_\theta\mathcal L$ satisfies $\E[\widehat g]=\nabla_\theta\JW$,
\emph{proved} as Proposition~\ref{prop:gradient} below (and independently confirmed
against exact enumeration to quadrature tolerance, \S\ref{sec:numerics}).
\end{remark}

\begin{proposition}[unbiased Max@$K$ policy gradient]
\label{prop:gradient}
Adopt the finite-support regime of Theorem~\ref{thm:unbiased}: a finite item set
$\{1,\dots,M\}$ with $M\ge n+1$, a full-support normalized policy
$p_i=p_\theta(i)>0$ that is continuously differentiable in a finite-dimensional
parameter $\theta$, fixed bounded rewards $|R_i|\le R_{\max}$, and $1\le K\le n$.
Let $f_\theta$ be the joint law of the
Gumbel-Top-$n$ draw ($n\ge K$; $K$ enters only through $\Jhat$): its pool $\pool$, its
threshold item, and its priority $\kappa=G_{(n+1)}$, and let
\[
\widehat g\;:=\;\nabla_\theta\Jhat(\pool,\kappa)\;+\;\sgop\!\big(\Jhat(\pool,\kappa)\big)\,\nabla_\theta\log f_\theta
\]
be the score-function surrogate gradient of \S\ref{sec:prelim} (equivalently
$-\nabla_\theta\mathcal L$ for $\mathcal L=-(\Jhat+\sgop(\Jhat)\log f)$: the pathwise
term through $P_{\WOR}$ and $q_i$ plus the sampler score). Then $\widehat g$ is an
unbiased estimator of the true Max@$K$ policy gradient:
\[
\E_{(\pool,\kappa)\sim f_\theta}\!\big[\widehat g\big]\;=\;\nabla_\theta\JW .
\]
\end{proposition}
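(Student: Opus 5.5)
The plan is to differentiate the identity of Theorem~\ref{thm:unbiased} under the integral sign, using a reference-measure decomposition of $f_\theta$. First write the outcome space of the Gumbel-Top-$n$ draw concretely: an ordered (or unordered) pool set $P$ of size $n$, a threshold item $j\notin P$, and the threshold $\tau=e^{-\kappa}\in(0,\infty)$. In exponential-race coordinates, the joint density with respect to counting measure on $(P,j)$ times Lebesgue measure on $\tau$ is explicit:
\[
f_\theta(P,j,\tau)=p_j e^{-p_j\tau}\prod_{i\in P}\bigl(1-e^{-p_i\tau}\bigr)\prod_{l\notin P\cup\{j\}}e^{-p_l\tau}.
\]
This follows because item $j$ must fire at time $\tau$, all of $P$ before it, and all others after. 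Crucially, the dominating measure does not depend on $\theta$. Then Theorem~\ref{thm:unbiased} reads
\[
\JW(\theta)=\sum_{P,j}\int_0^\infty \Jhat_\theta(P,\tau)\,f_\theta(P,j,\tau)\,d\tau,
\]
where $\Jhat_\theta(P,\tau)$ is~($\star$) with $\kappa=-\log\tau$ held fixed. It is a function of $\theta$ only through $P_{\WOR}(S)$ and $q_i(\kappa)$, exactly the explicit dependence the surrogate differentiates.

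Next, differentiate both sides. The sum over $(P,j)$ is finite since $M<\infty$, so the only issue is interchanging $\nabla_\theta$ with $\int d\tau$. Once that is justified, the product rule gives
\[
\nabla_\theta\JW=\E_{f_\theta}\bigl[\nabla_\theta\Jhat+\Jhat\,\nabla_\theta\log f_\theta\bigr]=\E[\widehat g],
\]
since $\nabla_\theta(\Jhat f)=(\nabla\Jhat)f+\Jhat f\nabla\log f$ wherever $f>0$. This is precisely the statement, and it also explains why $\kappa$ must be stop-gradiented: $\tau$ is the integration variable of a $\theta$-free reference measure.

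The main obstacle is the dominated-convergence step. The $\tau\to0$ corner is where the $1/\prod q_i$ weights blow up, and Remark~\ref{rmk:variance} shows that individual second moments diverge there when $n<2K$. Here, however, we only need integrability of first moments, locally uniformly in $\theta$, which holds for every $K\le n$. The key is that in $\Jhat f$ the factor $\prod_{i\in P}(1-e^{-p_i\tau})$ already contains $\prod_{i\in S}(1-e^{-p_i\tau})$ for each $S\subseteq P$. Each term of $\Jhat f$ therefore equals
\[
P_{\WOR}(S)\max_S R\cdot p_je^{-p_j\tau}\prod_{i\in P\setminus S}(1-e^{-p_i\tau})\prod_{l}e^{-p_l\tau},
\]
which is smooth in $\theta$ with no singular denominator. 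Its $\theta$-derivative is bounded near $\tau=0$ by a constant, and for large $\tau$ by $C(1+\tau)e^{-p_{\min}\tau}$, using $\partial_\theta e^{-p\tau}=-\tau(\partial_\theta p)e^{-p\tau}$. These bounds are uniform over a compact neighborhood of $\theta$, because $p_i$ is $C^1$ and bounded away from $0$ there. Thus I would differentiate the integrand in this cancelled form, apply dominated convergence (or Leibniz's rule for parametric integrals), and then observe pointwise, for $\tau>0$, that the derivative of the cancelled integrand equals $(\nabla\Jhat+\Jhat\nabla\log f)f$. That identification completes the argument.
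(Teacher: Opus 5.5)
Your proposal is correct and follows the paper's proof (Appendix~A) essentially step for step. It uses the same exponential-clock density on the $\theta$-free reference measure (counting on $(P,m)$ times Lebesgue $d\tau$) and the same cancellation of $\prod_{i\in S}q_i$ against the pool factor of $f_\theta$. It also uses the same locally uniform envelope $C(1+\tau)e^{-c\tau}$ to license Leibniz's rule; the paper takes the rate $c=(M-n)p_{\min}$, which is guaranteed positive by $M\ge n+1$. The only detail the paper states explicitly that you leave implicit is that passing from $\kappa$ to $\tau=e^{-\kappa}$ has a $\theta$-free Jacobian, so $\nabla_\theta\log f_\theta$ is the same sampler score in either coordinate.
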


\begin{proof}[Proof idea]
$\Phi(\theta):=\E_{f_\theta}[\Jhat]\equiv\JW$ at every full-support $\theta$
(Theorem~\ref{thm:unbiased}), so $\nabla_\theta\Phi=\nabla_\theta\JW$, and the claim
reduces to differentiating under the sampler expectation. The support of $f_\theta$ is
$\theta$-free, the discrete $(P,m)$-part interchanges termwise, and the remaining 1-D
$\tau$-integral admits an integrable dominating envelope
$C(1+\tau)e^{-(M-n)p_{\min}\tau}$, uniform on a compact neighborhood of $\theta$: the
Horvitz--Thompson weight's $\tau\to0$ blow-up is annihilated by the pool factor of
$f_\theta$, the same cancellation as in the crux identity. Appendix~\ref{app:gradproof}
gives the full argument.
\end{proof}

Unbiasedness alone does not certify trainability. The next result shows that the
$n\ge2K$ regime of Remark~\ref{rmk:variance} controls not only the objective terms
but the \emph{entire} surrogate gradient, score factor included.

\begin{proposition}[full-gradient second moment: $n\ge2K$ suffices]
\label{prop:gradmoment}
Adopt the assumptions and notation of Proposition~\ref{prop:gradient}, fix $\theta$,
and set $L:=\max_{i\le M}\|\nabla_\theta\log p_i(\theta)\|<\infty$. If $n\ge2K$, the
surrogate gradient is square-integrable:
\[
\E_{(\pool,\kappa)\sim f_\theta}\big\|\widehat g\big\|^2
\;\le\;
2R_{\max}^2L^2\,\max_{|S|=K}\,
\E\Big[\big(9K^2+(n{+}1{+}\tau)^2\big)\,W_S^2\Big]\;<\;\infty,
\]
where $\tau=e^{-\kappa}$ and
$W_S=\ind\{S\subseteq\pool\}/\prod_{i\in S}q_i(\kappa)$ is the degree-$K$
Horvitz--Thompson weight of Remark~\ref{rmk:variance}. In particular $n\ge2K$ is a
sufficient training-stability condition for the full gradient estimator, not only
for its objective terms. We claim sufficiency only: term-level divergence for
$n<2K$ is Remark~\ref{rmk:variance}'s, and whether exact cross-term cancellations
can leave the aggregated gradient square-integrable for $n<2K$ on special instances
is open.
\end{proposition}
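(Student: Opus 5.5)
The plan is to split $\widehat g$ into its pathwise and score parts and bound each by a deterministic multiple of the same degree-$K$ Horvitz--Thompson weights $W_S$. Using $\|a+b\|^2\le2\|a\|^2+2\|b\|^2$,
\[
\|\widehat g\|^2\le 2\|\nabla_\theta\Jhat\|^2+2\Jhat^2\|\nabla_\theta\log f_\theta\|^2 .
\]
Write $\Jhat=\sum_{|S|=K}P_{\WOR}(S)\,(\max_S R)\,W_S$. The coefficients satisfy $\sum_S P_{\WOR}(S)=1$, so this is a convex combination. Jensen/Cauchy--Schwarz then gives
\[
\Jhat^2\le R_{\max}^2\sum_S P_{\WOR}(S)\,W_S^2 ,
\]
and the analogous bound for $\nabla_\theta\Jhat$ once each term's log-gradient is controlled. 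Taking expectations, every quantity reduces to $\max_{|S|=K}\E[c(\tau)W_S^2]$ with an explicit polynomial factor $c(\tau)$. The target is to show that factor is $L^2\,9K^2$ for the pathwise part and $L^2(n{+}1{+}\tau)^2$ for the score part.

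\textbf{Pathwise part.} On $\{S\subseteq\pool\}$ we have
\[
\nabla_\theta\big(P_{\WOR}(S)W_S\big)=P_{\WOR}(S)\,W_S\Big(\nabla\log P_{\WOR}(S)-\sum_{i\in S}\nabla\log q_i\Big).
\]
\emph{The $P_{\WOR}$ term.} $P_{\WOR}(S)$ is a sum over orderings of products $\prod_k p_{i_k}/(1-\sum_{j<k}p_{i_j})$. Its log-gradient is therefore a convex combination of ordered log-gradients. Each factor contributes $\nabla\log p_{i_k}$, of norm at most $L$, plus the normalizer gradient $\sum_{j<k}p_{i_j}\nabla\log p_{i_j}/(1-\sum_{j<k}p_{i_j})$. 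Since $\sum_{\text{all}}p_j\nabla\log p_j=0$, this equals minus the renormalized average over the remaining items, so it also has norm at most $L$. Hence $\|\nabla\log P_{\WOR}(S)\|\le 2KL$.

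\emph{The $q_i$ terms.} We have
\[
\nabla\log q_i=\frac{x e^{-x}}{1-e^{-x}}\,\nabla\log p_i,\qquad x=p_i\tau ,
\]
and the scalar factor lies in $(0,1]$, so the sum over $S$ contributes at most $KL$. The total log-gradient is thus at most $3KL$, which gives the $9K^2$ term.

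\textbf{Score part.} I would write $f_\theta$ in exponential-clock coordinates: pool clocks $t_i<\tau$, threshold item at $\tau$, and all other clocks $>\tau$. This gives
\[
f_\theta=\prod_{\text{drawn}}p_i\,e^{-p_it_i}\cdot p_{\mathrm{thr}}e^{-p_{\mathrm{thr}}\tau}\cdot\prod_{\text{rest}}e^{-p_j\tau},
\]
times a $\theta$-free Jacobian for $\kappa\mapsto\tau$. Then $\nabla\log f$ is a sum of $n{+}1$ terms $\nabla\log p_i$ minus $\sum t_i p_i\nabla\log p_i$ over all items, with every $t_i\le\tau$. The first group has norm at most $(n{+}1)L$. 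The second has norm at most $\tau L\sum_i p_i=\tau L$. Together $\|\nabla\log f\|\le(n{+}1{+}\tau)L$, and combining with the bound on $\Jhat^2$ yields the displayed inequality.

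\textbf{Finiteness (main obstacle).} It remains to show $\E[(1+\tau)^2W_S^2]<\infty$ when $n\ge2K$. I would reuse the conditioning of Remark~\ref{rmk:variance}. It gives $\E[W_S^2\mid B]=\prod_{i\in S}(1-e^{-p_ib})^{-1}$ with $b=b_{n-K+1}$ and $\tau=b$ on the event. So one must integrate $(1+b)^2\prod_{i\in S}(1-e^{-p_ib})^{-1}$ against the density of the $(n{-}K{+}1)$-st order statistic of $M-K$ independent exponentials.
\begin{itemize}
\item Near $0$, the density is $O(b^{n-K})$ and the weight is $O(b^{-K})$, giving an integrable $b^{n-2K}$ precisely when $n\ge2K$.
\item For large $b$, the weight tends to $1$ and the density has a tail of order $e^{-(M-n)p_{\min}b}$ since $M\ge n+1$. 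This absorbs the polynomial $(1+b)^2$.
\end{itemize}
The delicate points are justifying the small-$b$ density bound uniformly across the finitely many subsets $S$ and handling the polynomial weight. Both are routine once the conditional identity is in hand, so I expect the real care to lie in the $\nabla\log P_{\WOR}$ normalizer bound.
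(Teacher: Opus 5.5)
Your architecture is the paper's. It uses the same three score bounds: $\nabla_\theta\log q_i$ equals your scalar factor $\varphi(x)=xe^{-x}/(1-e^{-x})\in(0,1)$ times $\nabla_\theta\log p_i$; $\|\nabla_\theta\log P_{\WOR}(S)\|\le2KL$ via remaining-mass denominators; and the sampler score is bounded by $L(n+1+\tau)$. It then applies Jensen over the probability weights $P_{\WOR}(S)$, and controls the moments by background-clock conditioning with a near-zero/tail split on $b=b_{n-K+1}$. You split $\widehat g$ with $\|u+v\|^2\le2\|u\|^2+2\|v\|^2$ instead of first forming the paper's pointwise bound $\|\widehat g\|\le R_{\max}L(3K+n+1+\tau)\sum_SP_{\WOR}(S)W_S$. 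Both routes give the identical constant.

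The step that needs repair is your sampler density. In Proposition~\ref{prop:gradient}, $f_\theta$ is the law of $(\pool,m,\kappa)$ alone. The individual pool clocks are not part of the recorded draw, so they are integrated out, and the pool factor is $\prod_{i\in\pool}(1-e^{-p_i\tau})=\prod_{i\in\pool}q_i(\kappa)$, not $\prod_{i\in\pool}p_ie^{-p_it_i}$. The score you bound, which contains the $t_i$, belongs to a finer random variable. It is not the $\nabla_\theta\log f_\theta$ that appears in $\widehat g$.

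The fix is one line and uses your own bound on $\nabla_\theta\log q_i$. With the correct density,
$\nabla_\theta\log f_\theta=\sum_{i\in\pool}\varphi(p_i\tau)\nabla_\theta\log p_i+(1-p_m\tau)\nabla_\theta\log p_m-\tau\sum_{j\notin\pool\cup\{m\}}p_j\nabla_\theta\log p_j$.
Its norm is at most $L\big(n+1+\tau\sum_{j\notin\pool}p_j\big)\le L(n+1+\tau)$, which is exactly how the paper argues.

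Alternatively, given $(\pool,m,\tau)$ the pool clocks are independent truncated exponentials, and $\varphi(p_i\tau)=\E[1-p_it_i\mid t_i<\tau]$. So the true score is the conditional expectation of your fine score, and your pointwise bound transfers by conditional Jensen. With this correction your proof goes through and coincides with the paper's.
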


\begin{proof}[Proof idea]
Three elementary score bounds hold pointwise, \emph{uniformly in the threshold}.
(i)~The inclusion scores never blow up:
$\nabla_\theta\log q_i(\kappa)=\varphi(p_i\tau)\,\nabla_\theta\log p_i$ with
$\varphi(x)=x/(e^x-1)\in(0,1)$ for $x>0$, so $\|\nabla_\theta\log q_i\|\le L$ even
as $q_i\to0$: the $1/q_i$ blow-up affects the weight, never its score.
(ii)~$\|\nabla_\theta\log P_{\WOR}(S)\|\le2KL$, because every Plackett--Luce
denominator is the mass of the remaining items, whose log-gradient is a convex
combination of the $\nabla_\theta\log p_j$. (iii)~The sampler score grows at most
linearly in the clock threshold, $\|\nabla_\theta\log f_\theta\|\le L(n{+}1{+}\tau)$.
Together these give the master pointwise bound
\[
\|\widehat g\|\;\le\;R_{\max}L\,\big(3K+n+1+\tau\big)
\sum_{|S|=K}P_{\WOR}(S)\,W_S ,
\]
and since $\sum_{|S|=K}P_{\WOR}(S)=1$, Jensen's inequality reduces
$\E\|\widehat g\|^2$ to the displayed per-term moments. Those are finite for
$n\ge2K$ by the exponential-race conditioning of Remark~\ref{rmk:variance}: the
extra $(n{+}1{+}\tau)^2$ factor is harmless, bounded near $\tau=0$ where the weight
blows up and polynomial against the threshold's exponential upper tail.
Appendix~\ref{app:gradmoment} gives the full proof; the three score bounds and the
master inequality are verified pointwise by autodiff on randomized draws, including
extreme thresholds, in the ancillary suite
(\texttt{anc/tests/test\_gradient\_moment.py}).
\end{proof}

\section{An exact combinatorial collapse (Theorem 2)}
\label{sec:tractable}

($\star$) is a sum over $\binom nK$ subsets, but a brute-force evaluation is far more
costly: forming each $P_{\WOR}(S)$ from its $K!$ orderings makes the total work
$\binom nK K!$, $\approx5\times10^8$ at $n{=}16,K{=}8$. The subset sum nonetheless
collapses exactly. The engine is an integral representation of $P_{\WOR}$ whose integrand
factorizes over items.

\begin{lemma}[exponential-race representation of $P_{\WOR}$]
\label{lem:pwor}
With $\sum_i p_i=1$ and $P_S=\sum_{i\in S}p_i$,
\[
P_{\WOR}(S)=\int_0^\infty e^{-t}\,\sum_{i\in S} p_i
\prod_{i'\in S\setminus\{i\}}\!\big(e^{p_{i'}t}-1\big)\,dt .
\]
\end{lemma}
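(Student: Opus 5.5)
We prove Lemma~\ref{lem:pwor} with the same exponential-race device used in the proof of Theorem~\ref{thm:unbiased}. Attach independent clocks $T_j\sim\Expo(p_j)$ to all $M$ items. A Plackett--Luce draw of size $K$ is then the set of the $K$ smallest clocks. This holds because the first arrival is item $j$ with probability $p_j/\sum p$. By memorylessness, the race among the remaining items restarts with rates proportional to their $p$'s, which is exactly the renormalization step of the sequential draw. Hence
\[
P_{\WOR}(S)=\Pr\big(\text{the $K$ smallest clocks are exactly those of $S$}\big)
=\Pr\Big(\max_{i\in S}T_i<\min_{j\notin S}T_j\Big).
\]

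Next we decompose by which item of $S$ arrives last. The event in question is the disjoint union over $i\in S$ of the events ``$T_i=t$ for some $t$, every $T_{i'}<t$ for $i'\in S\setminus\{i\}$, and every $T_j>t$ for $j\notin S$.'' Ties have probability zero, since all $p_j>0$ and the clocks are continuous. Conditioning on $T_i=t$ and using independence gives
\[
P_{\WOR}(S)=\sum_{i\in S}\int_0^\infty p_ie^{-p_it}\prod_{i'\in S\setminus\{i\}}\big(1-e^{-p_{i'}t}\big)\prod_{j\notin S}e^{-p_jt}\,dt .
\]

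The final step is algebraic and uses normalization. Factor $e^{-p_{i'}t}$ out of each term $(1-e^{-p_{i'}t})$, writing it as $e^{-p_{i'}t}(e^{p_{i'}t}-1)$. Collecting exponentials, the integrand carries $e^{-t\sum_{j}p_j}=e^{-t}$, because $\sum_j p_j=1$. What remains is $p_i\prod_{i'\in S\setminus\{i\}}(e^{p_{i'}t}-1)$. Exchanging the finite sum with the integral is harmless, since all terms are nonnegative (Tonelli). This yields the displayed formula, and $P_S$ never actually appears.

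The main point requiring care is the first step: the identification of the sequential PL draw with the $K$ smallest exponential clocks. It can be cited from the Gumbel-max / exponential-race equivalence \citep{vieira2014gumbel} already invoked in Theorem~\ref{thm:unbiased}. Alternatively, it can be checked directly by induction on $K$ via memorylessness. The rest is bookkeeping; integrability is automatic, because the integrand is a probability density contribution.
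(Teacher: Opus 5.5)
Your proof is correct and takes essentially the same route as the paper: identify $P_{\WOR}(S)$ with $\Pr(\max_{i\in S}T_i<\min_{j\notin S}T_j)$ via the exponential race, then factor out exponentials and use $\sum_j p_j=1$ to extract $e^{-t}$. Your decomposition by which member of $S$ arrives last is exactly the product-rule expansion of $\frac{d}{dt}\prod_{i\in S}(1-e^{-p_it})$ in the paper's proof, and your memorylessness argument fills in the Plackett--Luce/exponential-race equivalence that the paper only asserts.
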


\begin{proof}
Attach independent $X_i\sim\Expo(p_i)$. Their ascending order is a PL sequential draw
with weights $p$, so a size-$|S|$ WOR draw yields $S$ iff the $|S|$ smallest $X$'s are
precisely $S$:
\[
P_{\WOR}(S)=\Pr\!\Big(\max_{i\in S}X_i<\min_{j\notin S}X_j\Big)
=\int_0^\infty \frac{d}{dt}\!\prod_{i\in S}\big(1-e^{-p_i t}\big)\;
\prod_{j\notin S} e^{-p_j t}\,dt .
\]
The derivative is $\sum_{i\in S} p_i e^{-p_i t}\prod_{i'\ne i}(1-e^{-p_{i'}t})$ and
$\prod_{j\notin S}e^{-p_j t}=e^{-(1-P_S)t}=e^{-t}\prod_{i\in S}e^{p_i t}$.
Distributing $\prod_{i\in S}e^{p_i t}$ collapses each term via
$e^{p_it}p_ie^{-p_it}=p_i$ and $e^{p_{i'}t}(1-e^{-p_{i'}t})=e^{p_{i'}t}-1$, giving the
integrand $e^{-t}\sum_{i\in S}p_i\prod_{i'\ne i}(e^{p_{i'}t}-1)$.
\end{proof}

The integrand \textbf{factorizes over items}, which is what collapses the subset sum in
($\star$). Define $h_i(t)=(e^{p_it}-1)/q_i(\kappa)$ and $c_i(t)=p_i/(e^{p_it}-1)$, so
$p_i/q_i=c_ih_i$.

\begin{theorem}[exact subset-sum collapse; numerical integral]
\label{thm:tractable}
$\displaystyle\Jhat=\int_0^\infty e^{-t}\,F(t)\,dt$, where
\[
F(t)=\sum_{\substack{S\subseteq\pool\\|S|=K}}
\Big(\max_{i\in S}R_i\Big)\Big(\prod_{i\in S}h_i(t)\Big)\Big(\sum_{i\in S}c_i(t)\Big),
\]
and $F(t)$ is computable in $O(nK)$. The combinatorial subset sum thus collapses
\emph{exactly} to a single one-dimensional integral. If $(t_r,w_r)_{r=1}^Q$ are
Gauss--Laguerre nodes and weights, define the practical approximation
\[
\JhatQ:=\sum_{r=1}^Q w_r F(t_r).
\]
It costs $O(n\log n+nKQ)$ arithmetic operations in the pool size (the $n\log n$ is
the reward sort), after exact elimination of the $\binom nK$-term subset sum, plus a
one-time $O(M)$ softmax precompute for a flat parameterization. Finite $Q$ introduces
value and gradient quadrature error: we do not certify an explicit
$Q(\epsilon,p_{\min},k,n)$ bound, so this is a fixed-node arithmetic cost, not a
polynomial-time $\epsilon$-approximation guarantee and not an exact finite-$Q$
unbiasedness claim.
\end{theorem}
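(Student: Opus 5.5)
We first obtain the integral identity by substituting Lemma~\ref{lem:pwor} into~($\star$) and exchanging the finite subset sum with the integral. For a fixed pool subset $S$, write $p_i/q_i=c_ih_i$ and $(e^{p_{i'}t}-1)/q_{i'}=h_{i'}$. Dividing the Lemma's integrand by $\prod_{i\in S}q_i(\kappa)$ then gives
\[
\frac{1}{\prod_{i\in S}q_i}\sum_{i\in S}p_i\prod_{i'\in S\setminus\{i\}}\big(e^{p_{i'}t}-1\big)=\Big(\prod_{i\in S}h_i(t)\Big)\Big(\sum_{i\in S}c_i(t)\Big).
\]
Since $q_i(\kappa)$ does not depend on $t$, it passes through the integral. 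The sum is finite and every integrand is nonnegative and integrable, because each term is $O(e^{-(1-P_S)t})$ with $P_S<1$ whenever $M\ge n+1>K$. Tonelli therefore lets us swap the sum and the integral, which gives $\Jhat=\int_0^\infty e^{-t}F(t)\,dt$ exactly. One point needs care: $c_i(t)$ is singular at $t=0$, but the product $c_ih_i=p_i/q_i$ is regular there. So the $O(nK)$ recursion below should carry $c_ih_i$ as a single quantity rather than dividing.

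To evaluate $F(t)$ in $O(nK)$ at fixed $t$, we sort the pool by reward once, $R_{(1)}\le\dots\le R_{(n)}$, breaking ties by index; this costs $O(n\log n)$. Classify each subset $S$ by its top-ranked member $j$. Then $\max_SR=R_{(j)}$, and $S=\{j\}\cup A$ with $A\subseteq\{1,\dots,j-1\}$ and $|A|=K-1$. The remaining factor splits as
\[
\prod_Sh\cdot\sum_Sc=h_jc_j\cdot\prod_Ah+h_j\cdot\prod_Ah\sum_Ac .
\]
Define elementary-symmetric-type prefix quantities over the first $j-1$ sorted items: $e_k^{(j)}=\sum_{|A|=k}\prod_Ah$ and $d_k^{(j)}=\sum_{|A|=k}\prod_Ah\sum_Ac$, for $k\le K-1$. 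These obey the standard one-item-extension recursions:
\[
e_k^{(j+1)}=e_k^{(j)}+h_je_{k-1}^{(j)},\qquad d_k^{(j+1)}=d_k^{(j)}+h_jd_{k-1}^{(j)}+h_jc_je_{k-1}^{(j)}.
\]
Here $h_jc_j=p_j/q_j$ stays finite. Then
\[
F(t)=\sum_jR_{(j)}\big(h_jc_je_{K-1}^{(j)}+h_jd_{K-1}^{(j)}\big).
\]
Each step updates $O(K)$ entries and there are $n$ steps, so $F(t)$ costs $O(nK)$.

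The cost claim follows directly. There are $Q$ Gauss--Laguerre nodes, and the weight $e^{-t}$ is absorbed into $w_r$. The sort is shared across nodes, and $q_i(\kappa)$ is computed once in $O(n)$. The total is therefore $O(n\log n+nKQ)$, plus the $O(M)$ softmax needed to obtain the normalized $p_i$. The quadrature error is deliberately not bounded, so nothing further needs proof there.

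The main obstacle is making the DP bookkeeping correct and stable. The correctness of the derivative-type term $d_k$ is a one-line induction on $j$: split the subsets according to whether they contain item $j$, and apply the product rule to $\prod h\cdot\sum c$. Stability requires avoiding $c_i=\infty$ at small $t$, which we handle by carrying $p_i/q_i$ directly. Ties in reward do not matter, since any fixed tie-breaking assigns each subset exactly one maximizer.
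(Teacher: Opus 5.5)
Your proposal is correct and follows the paper's proof essentially step for step. You substitute Lemma~\ref{lem:pwor} into~($\star$), rewrite the integrand via $p_i/q_i=c_ih_i$ as $(\prod_S h)(\sum_S c)$, exchange the finite sum with the integral, partition subsets by a tie-broken maximizer, and run the same insert-one-item recursion; your ascending prefix sweep with $e_k,d_k$ is the paper's increasing-reward sweep with $E[m],G[m]$. Two small remarks: the exchange needs only linearity over finitely many integrable terms, so Tonelli is not needed and the sign of $\max_S R$ (which can be negative) does not matter; and carrying $c_ih_i=p_i/q_i$ as one quantity is a sensible numerical refinement the paper does not spell out, though Gauss--Laguerre nodes are strictly positive, so $c_i(t_r)$ is finite anyway.
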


\begin{proof}[Proof sketch]
Substitute Lemma~\ref{lem:pwor} into ($\star$) and exchange the finite sum with the
integral (Fubini). Using $p_i/q_i=c_ih_i$ and $(e^{p_{i'}t}-1)/q_{i'}=h_{i'}$, the
inner factor rewrites
\[
\frac1{\prod_{i\in S}q_i}\sum_{i\in S}p_i\!\!\prod_{i'\ne i}\!(e^{p_{i'}t}-1)
=\sum_{i\in S}c_i h_i\!\!\prod_{i'\ne i}\!h_{i'}
=\sum_{i\in S}c_i\prod_{i''\in S}h_{i''}
=\Big(\prod_{i\in S}h_i\Big)\Big(\sum_{i\in S}c_i\Big),
\]
which is $F(t)$ and proves the exact integral identity.

\emph{Sort-by-reward collapse.} Order the pool $R_1\ge\dots\ge R_n$, breaking reward
ties by a fixed item order. Assign each subset to its lowest-index maximizer, a unique
representative even when several items share the maximum reward. Thus partition
$S$ by that representative: $S=\{j\}\cup T$ with
$T\subseteq\{j{+}1,\dots,n\}$, $|T|=K-1$, $\max_S R=R_j$. Then
\[
F(t)=\sum_{j=1}^{n} R_j\, h_j\big(c_j A_j+B_j\big),\quad
A_j=\!\!\sum_{\substack{T\subseteq\{j+1..n\}\\|T|=K-1}}\!\prod_{i\in T}h_i,\quad
B_j=\!\!\sum_{\substack{T\subseteq\{j+1..n\}\\|T|=K-1}}\!
\Big(\prod_{i\in T}h_i\Big)\Big(\sum_{i\in T}c_i\Big),
\]
using $\big(\prod_S h\big)\big(\sum_S c\big)=h_j\prod_T h\,\big(c_j+\sum_T c\big)$. The
dynamic program below produces all $A_j,B_j$ in $O(nK)$; Gauss--Laguerre approximates
the $1$-D integral. To verify that the ideal integral is finite, write
$p_{\min,S}=\min_{i\in S}p_i>0$, each subset's contribution behaves as
$e^{-t}\big(\prod_S h\big)\big(\sum_S c\big)\sim(p_{\min,S}/\!\prod_{i\in S}q_i)\,
e^{-(1-P_S+p_{\min,S})t}$ (the $c_i(t)=p_i/(e^{p_it}-1)$ factor decays like
$p_ie^{-p_it}$), so with a full-support policy ($p_{\min,S}>0$) the integrand decays at
a strictly positive rate $\ge\min_S(1-P_S+p_{\min,S})>0$ for \emph{all}
$1\le K\le n\le M$, including the boundary $P_S=1$. This establishes integrability
of the ideal expression. It does not by itself prove monotonic Gauss--Laguerre error
decay or a uniform finite-$Q$ rate; convergence under node refinement is checked on
the finite configurations in \S\ref{sec:numerics}, and a certified error bound in
terms of $(p_{\min},k,n)$ is left to future work. (A coarser decay bound is
$O\!\big(e^{-(1-P_{\max})t}\big)$ with
$P_{\max}=\max_{|S|=K}\sum_{i\in S}p_i$.)
\end{proof}

\paragraph{The dynamic program.}
Process items $j=n,\dots,1$ (increasing reward). Maintain, over the current
\textbf{below-set} $\{j{+}1,\dots,n\}$, for $m=0,\dots,K$,
\[
E[m]=\!\!\sum_{|T|=m}\prod_{i\in T}h_i,\qquad
G[m]=\!\!\sum_{|T|=m}\Big(\prod_{i\in T}h_i\Big)\Big(\sum_{i\in T}c_i\Big),
\]
initialized $E[0]=1,\,G[0]=0$. At step $j$: read $A_j=E[K{-}1]$, $B_j=G[K{-}1]$,
accumulate $R_j h_j(c_jA_j+B_j)$ into $F$, then \textbf{insert} item $j$ (descending
$m$ so the right-hand sides stay pre-insert),
\[
G[m]\leftarrow G[m]+h_j\,G[m{-}1]+c_j h_j\,E[m{-}1],\qquad
E[m]\leftarrow E[m]+h_j\,E[m{-}1]\qquad(m=K,\dots,1).
\]
\emph{Correctness of the insert.} A size-$m$ subset of the enlarged below-set either
omits $j$ (old $E[m],G[m]$) or contains $j$: then it is $\{j\}\cup T'$ with
$|T'|=m{-}1$, the product gains $h_j$ and the $c$-sum gains $c_j$, so the included case
sums to $h_j(G[m{-}1]+c_jE[m{-}1])$. Each insert is $O(k)$; $n$ inserts give $O(nK)$
per node, hence $O(nKQ)$ overall (plus the $O(n\log n)$ reward sort, and, for a flat
softmax parameterization only, a one-time $O(M)$ normalizer precompute).

\begin{corollary}[pool-only form]
\label{cor:poolonly}
Every quantity in~($\star$), in the DP above, and in the score-function surrogate of
\S\ref{sec:prelim} is computable from the drawn items' probabilities alone together with
the normalization $\sum_y p_\theta(y)=1$ (and, for gradients, from those
probabilities' differentiable computation graphs): the inclusion probabilities
$q_i(\kappa)$~\eqref{eq:incl} and the factors $h_i,c_i$ involve only
$\{p_i:i\in\pool\}$; $P_{\WOR}(S)$ for $S\subseteq\pool$ uses only pool probabilities and
$\sum_i p_i=1$ (Lemma~\ref{lem:pwor}); and the sampler log-density collapses, in the
exponential-clock coordinates $\tau=e^{-\kappa}$ of Theorem~\ref{thm:unbiased}'s proof
(joint density as in Appendix~\ref{app:gradproof}), to
\[
\log f_\theta(\pool,m,\tau)
=\sum_{i\in\pool}\log q_i(\kappa)\;+\;\log p_m-p_m\tau\;-\;\tau\Big(1-\textstyle\sum_{i\in\pool}p_i-p_m\Big),
\]
computable from $\{p_\theta(y_i):i\in\pool\cup\{m\}\}$.
\end{corollary}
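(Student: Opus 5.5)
The plan is to handle the three kinds of terms in the surrogate separately. First, the weighted estimator~($\star$) and the DP need $q_i(\kappa)$, $h_i(t)$, $c_i(t)$ for $i\in\pool$. By~\eqref{eq:incl}, $q_i(\kappa)=1-\exp(-p_i\tau)$ with $\tau=e^{-\kappa}$ the observed threshold, so each is an explicit function of $p_i$ and the realized scalar $\kappa$. The same holds for $h_i=(e^{p_it}-1)/q_i$ and $c_i=p_i/(e^{p_it}-1)$ at each quadrature node. Second, $P_{\WOR}(S)$ for $S\subseteq\pool$ is handled by Lemma~\ref{lem:pwor}, whose integrand involves only $\{p_i:i\in S\}$. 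The complement mass $1-P_S$ enters only through $e^{-t}$, which is exactly where the normalization $\sum_y p_y=1$ is used. Since the DP of Theorem~\ref{thm:tractable} uses only $h_i,c_i,R_i$ for pool items, both the value and the pathwise gradient (via autodiff on the pool $p_i$'s graphs) are pool-only.

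The substantive step is the sampler log-density. I will write $f_\theta$ in exponential-clock coordinates as the joint density of the pool set, the threshold item $m$, and $\tau=T_{(n+1)}$. The event is that every $i\in\pool$ has $T_i<\tau$, clock $m$ equals $\tau$ exactly, and every $j\notin\pool\cup\{m\}$ has $T_j>\tau$. By independence of the $T_i\sim\Expo(p_i)$, the density factorizes as
\[
f_\theta(\pool,m,\tau)=\prod_{i\in\pool}\bigl(1-e^{-p_i\tau}\bigr)\cdot p_me^{-p_m\tau}\cdot\prod_{j\notin\pool\cup\{m\}}e^{-p_j\tau}.
\]
The unordered pool carries no ordering factor, since only membership in $\{T_i<\tau\}$ matters. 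The first product is $\prod_{i\in\pool}q_i(\kappa)$. The last product is $\exp\bigl(-\tau\sum_{j\notin\pool\cup\{m\}}p_j\bigr)$, and by normalization the exponent equals $-\tau(1-\sum_{i\in\pool}p_i-p_m)$. Taking logarithms gives the displayed formula. Because it depends only on $\{p_i:i\in\pool\cup\{m\}\}$ and $\tau$, differentiating it needs only those $n{+}1$ probabilities' computation graphs. Here $\tau$ is held fixed as in Proposition~\ref{prop:gradient}, and autodiff routes $\nabla_\theta\sum_{j\text{ off-pool}}p_j=-\nabla_\theta(\sum_{\pool}p_i+p_m)$ automatically.

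The main obstacle is justifying that this expression is the correct reference-measure density, i.e.\ the same one used in Appendix~\ref{app:gradproof}. The density must be taken with respect to a $\theta$-free measure: counting measure on the pair (pool, $m$) times Lebesgue measure on $\tau$. I would argue this directly from the a.s.\ distinctness of clocks, which makes the map from clocks to $(\pool,m,\tau)$ well defined. I would then check that the Jacobian from $\kappa$ to $\tau$, namely $e^{-\kappa}$, is $\theta$-independent, so it drops out of $\nabla_\theta\log f$. This is why the corollary may state the density in $\tau$-coordinates even though the sampler reports $\kappa$. A quick sanity check is to integrate over $\tau$ and sum over $(\pool,m)$ and confirm the total is $1$.
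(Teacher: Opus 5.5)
Your proposal is correct and follows essentially the same route as the paper. The paper's own proof is only the observation that the far-item factor of the Appendix~\ref{app:gradproof} density telescopes: $\log(1-q_j)=-p_j\tau$ exactly, and $\sum_{j\notin\pool\cup\{m\}}p_j=1-\sum_{i\in\pool}p_i-p_m$. Your extra steps (deriving the factorized density from the exponential clocks, fixing the $\theta$-free reference measure, and noting that the Jacobian $\tau$ of $\kappa\mapsto\tau$ is $\theta$-free) are the ones the paper places in Appendix~\ref{app:gradproof} rather than in the corollary's proof.
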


\begin{proof}
Only the far-item factor of $f_\theta$ involves items outside the draw, and it
telescopes through the total mass: $\log(1-q_j)=-e^{-\kappa}p_j$ \emph{exactly} (since
$1-q_j=\exp(-p_j\tau)$), so
$\sum_{j\notin\pool\cup\{m\}}\log(1-q_j)=-\tau\,(1-\sum_{i\in\pool}p_i-p_m)$.
\end{proof}

Hence no enumeration of the support is required: an exact-SBS decoder that reports its
$n{+}1$ sampled sequences' log-probabilities and the priority threshold suffices to form
the ideal integrand, its finite-$Q$ approximation, and the associated surrogate gradient
(whose pathwise term then flows into the drawn sequences' log-probabilities, i.e.\ back
through the decoder). The reference
implementation in the ancillary files ships these pool-only forms alongside the
flat-softmax ones and checks them equal to float64 precision. The sequence-scale
case deserves its own formal statement:

\begin{corollary}[finite structured sequence policies]
\label{cor:sequence}
Let the items be the terminal sequences of a finite decoding tree (fixed-length or
length-capped), with normalized full-support terminal probabilities
$p_\theta(y)>0$, $\sum_yp_\theta(y)=1$, each continuously differentiable in
$\theta$; let rewards be bounded and $\theta$-independent; and let exact Stochastic
Beam Search at width $n{+}1$ return the top-$(n{+}1)$ Gumbel-perturbed terminal
sequences and the priority threshold $\kappa=G_{(n+1)}$. Then
Theorem~\ref{thm:unbiased}, Propositions~\ref{prop:gradient}
and~\ref{prop:gradmoment}, and Theorem~\ref{thm:tractable} apply verbatim with
``items'' $=$ terminal sequences, and every required quantity is a function of the
values and differentiable computation graphs of the $n{+}1$ drawn sequences'
log-probabilities, plus their rewards (Corollary~\ref{cor:poolonly}): no
enumeration of the sequence support is needed.
\end{corollary}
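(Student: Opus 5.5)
The plan is to reduce the corollary to a relabeling check: the earlier results use only a finite item set, full-support normalized $C^1$ probabilities, $\theta$-independent bounded rewards, and the exact Gumbel-Top-$(n{+}1)$ law with observed threshold $\kappa=G_{(n+1)}$. I will verify each hypothesis for the terminal sequences of a finite decoding tree, and then invoke Corollary~\ref{cor:poolonly} for the pool-only claim.

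\emph{Step 1 (finite item model).} Take the item set $\{1,\dots,M\}$ to be the terminal sequences of the tree. The tree is finite (fixed length or length-capped), so $M<\infty$, and this support is fixed in $\theta$; state constraints only prune it, as allowed in \S\ref{sec:prelim}. Set $p_i=p_\theta(y_i)$. The hypotheses give $p_i>0$, $\sum_i p_i=1$, and $p_i\in C^1$ in $\theta$, and the rewards $R_i$ are bounded and $\theta$-free. We also need $M\ge n+1$, which holds because SBS at width $n{+}1$ returns $n{+}1$ distinct terminals. These are exactly the standing assumptions of Theorem~\ref{thm:unbiased} and Propositions~\ref{prop:gradient} and~\ref{prop:gradmoment}. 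Theorem~\ref{thm:tractable} is a deterministic identity given the pool, so it needs only Lemma~\ref{lem:pwor}, whose sole input is $\sum_ip_i=1$.

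\emph{Step 2 (sampler law).} This is the main obstacle. I must show that exact SBS at width $n{+}1$ returns the top-$(n{+}1)$ terminals under i.i.d.\ perturbations $G_y=\log p_\theta(y)+\Gumbel(0,1)$, together with the value $G_{(n+1)}$. I will cite \citet{kool2019sbs}. There, SBS samples Gumbels top-down conditioned on the parent's maximum. By the Gumbel-max/top-down construction, this yields terminal perturbations jointly distributed as independent $\Gumbel(\log p_\theta(y))$, and exact SBS provably returns their top $n{+}1$ with their exact perturbed values. Consequently the clocks $T_y=e^{-G_y}$ are independent $\Expo(p_\theta(y))$, which is precisely the exponential race used in every proof. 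The joint law $f_\theta$ of (pool, threshold item, $\tau$) is therefore the same as in Appendix~\ref{app:gradproof}, with items renamed. I would stress two points. First, the proofs never use a flat parameterization, only the values $p_i$ and their gradients. Second, the constant $L=\max_i\|\nabla_\theta\log p_i\|$ is finite because there are finitely many $C^1$ terms.

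\emph{Step 3 (pool-only computability).} Apply Corollary~\ref{cor:poolonly}. In it, $q_i$, $h_i$, $c_i$, $P_{\WOR}(S)$ for $S\subseteq\pool$ (via Lemma~\ref{lem:pwor} with $\sum p=1$), the DP, and $\log f_\theta$ depend only on the pool probabilities, on $p_m$ for the threshold item $m$, and on $\tau$. The far-item mass enters only as $1-\sum_{\pool}p_i-p_m$, using normalization. Each $\log p_\theta(y)$ is a sum of per-step log-softmax terms produced by the decoder. Hence its value and computation graph are available for the $n{+}1$ drawn sequences, and autodiff through them implements both the pathwise and the score terms of $\widehat g$. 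No enumeration of the $M$ terminals is needed. The $O(M)$ softmax precompute of Theorem~\ref{thm:tractable} is replaced by per-step normalizers along the drawn prefixes.
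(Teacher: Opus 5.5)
Your proposal is correct and follows essentially the same route as the paper's proof. Like the paper, you check that the terminal sequences form a fixed, finite, full-support, normalized item set, cite \citet{kool2019sbs} for exact SBS realizing the Gumbel-Top-$(n{+}1)$ draw together with its threshold, and invoke Corollary~\ref{cor:poolonly} for pool-only computability. Your extra checks are consistent details the paper leaves implicit: $M\ge n+1$ follows from the existence of $G_{(n+1)}$, no proof uses a flat parameterization, and $L$ is finite.
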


\begin{proof}
The terminal sequences form a fixed, finite, full-support, normalized item set, so
the finite-support hypotheses of the cited results hold verbatim. Exact SBS with
top-down Gumbel propagation draws precisely the top-$(n{+}1)$ of independently
Gumbel-perturbed terminal log-probabilities \citep{kool2019sbs}, i.e.\ the
Gumbel-Top-$(n{+}1)$ draw of \S\ref{sec:prelim} on that item set, including its
threshold. Corollary~\ref{cor:poolonly} gives the pool-only computability. The
ancillary suite certifies the sampler law and the pool-only/flat equality on an
enumerable structured tree (\texttt{anc/tests/test\_sbs.py}).
\end{proof}

\paragraph{Positioning of the machinery.}
We do \emph{not} claim the integral representation of Lemma~\ref{lem:pwor} or its
numerical evaluation as novel: \citet{kool2020unordered} already give the same
Gumbel / exponential-race integral for without-replacement inclusion probabilities
(for the \emph{mean} objective) and observe that it can be integrated numerically, and
the elementary-symmetric ``insert one item at a time'' collapse is standard
conditional-Poisson / top-$k$ machinery
\citep{chen1994conditional,meister2021cpsbs,oosterhuis2021plrank}. The load-bearing
novelty of
Theorem~\ref{thm:tractable} is that prior WOR estimators
\citep{kool2020unordered,duffield2007priority,cohen2009varopt} collapse only
\emph{mean / additive} Horvitz--Thompson functionals, a linear sum over sampled
items, whereas here we collapse a genuine sum \emph{over the $\binom nK$ subsets}, each
carrying a \textbf{max} weight and a $P_{\WOR}(S)$ that itself costs $K!$ to form
naively (brute-force total $\binom nK K!$), via the reward-sorted telescoping plus
the exponential-race integral. Concretely, we \emph{assemble} Kool's integral, an elementary-symmetric DP,
and Gauss--Laguerre quadrature into the first $O(n\log n + nKQ)$ collapsed evaluation we
are aware of for this sample-reuse Max@$K$ WOR estimator, improving the exact-but-$O(2^K)$
inclusion--exclusion evaluation
that Kool's machinery would give when applied term-by-term to ($\star$) (and, a
fortiori, the naive $\binom{n}{K}K!$ enumeration).

\begin{proposition}[reductions]
\label{prop:reductions}
At its two boundaries the estimator~($\star$) connects to known estimators: an
\emph{exact} reduction at $K=1$, and a \emph{weight-profile} limit for PKPO (equal only
up to a positive scalar).
\begin{itemize}
\item[(i)] \emph{$K=1$.} $A_j=1,\,B_j=0$, so $F(t)=\sum_j R_j h_j c_j=\sum_j R_j p_j/q_j$
(independent of $t$), giving $\Jhat=\sum_{i\in\pool}(p_i/q_i)R_i$, the
Horvitz--Thompson single-draw estimator of \citet{duffield2007priority,kool2020unordered}.
\item[(ii)] \emph{Uniform-policy (exchangeable) limit.} In the uniform-policy limit
$p_i\to 1/M$ the induced size-$K$ set law becomes
\emph{exchangeable}, $P_{\WOR}(S)\to\binom MK^{-1}$, uniform over subsets, the
without-replacement analogue of an exchangeable pool (the WOR negative correlation
persists, distinct draws still cannot repeat; only the heterogeneity vanishes),
\emph{and simultaneously} the
inclusion probabilities equalize, $q_i(\kappa)\to q(\kappa)$. Hence every retained subset
carries one common Horvitz--Thompson coefficient, and the coefficients that $\Jhat$
assigns to the sorted pool rewards match PKPO's order-statistic weight profile
$\binom{i-1}{K-1}\binom nK^{-1}$ \citep{pkpo2025} up to that common positive scalar.
This is a profile correspondence, not an estimator identity: self-normalization is not
part of~($\star$) and generally sacrifices its exact unbiasedness.
\end{itemize}
\end{proposition}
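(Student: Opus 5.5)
The plan is to verify the two items separately. Item~(i) is an exact algebraic identity read off the dynamic program of Theorem~\ref{thm:tractable}. Item~(ii) is a pointwise limit of~($\star$) at a fixed realized draw $(\pool,\kappa)$, followed by a counting argument.

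\emph{Item (i).} At $K=1$ the program reads $A_j=E[0]$ and $B_j=G[0]$ before any insertion can touch index $0$. The insert recursion only updates $m=K,\dots,1$, so these stay at their initial values $E[0]=1$ and $G[0]=0$. Hence
\[
F(t)=\sum_j R_j h_j(t)c_j(t)=\sum_j R_j\,p_j/q_j(\kappa),
\]
because $c_ih_i=p_i/q_i$ by definition. This is independent of $t$, and $\int_0^\infty e^{-t}\,dt=1$, which gives $\Jhat=\sum_{i\in\pool}(p_i/q_i)R_i$.

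As a cross-check directly from~($\star$): the $1$-subsets of the pool are the singletons, $P_{\WOR}(\{i\})=p_i$, and $\max_{\{i\}}R=R_i$. This is the priority-sampling / single-draw Horvitz--Thompson form cited.

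\emph{Item (ii).} Fix the realized pool and threshold $\kappa$, and let $p\to(1/M,\dots,1/M)$. Three limits are needed.
\begin{itemize}
\item $P_{\WOR}(S)$ is a finite sum over orderings of rational functions of $p$, so it is continuous. At the uniform point every ordering of every $K$-set has the same probability $\prod_{r=0}^{K-1}(M-r)^{-1}$. Multiplying by the $K!$ orderings gives $P_{\WOR}(S)\to\binom MK^{-1}$.
\item By~\eqref{eq:incl}, $q_i(\kappa)=1-\exp(-p_ie^{-\kappa})$ is continuous in $p_i$. It therefore converges to the common value $q(\kappa)=1-\exp(-e^{-\kappa}/M)>0$.
\item Every term of~($\star$) thus converges, and
\[
\Jhat\to C(\kappa)\sum_{S\subseteq\pool,\,|S|=K}\max_{i\in S}R_i,\qquad C(\kappa)=\binom MK^{-1}q(\kappa)^{-K}>0 .
\]
\end{itemize}

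For the counting step, sort the pool rewards ascending, $R_{(1)}\le\dots\le R_{(n)}$. Assign each subset to its highest-rank maximizer, the analogue of the tie-breaking rule in Theorem~\ref{thm:tractable}'s proof. Then $R_{(i)}$ is the maximum of exactly $\binom{i-1}{K-1}$ subsets, namely those whose other $K-1$ members have ranks below $i$. Hence
\[
\sum_S\max_S R=\sum_i\binom{i-1}{K-1}R_{(i)}=\binom nK\sum_i\binom{i-1}{K-1}\binom nK^{-1}R_{(i)} .
\]
So the limiting coefficients equal PKPO's profile times the positive scalar $\binom nK C(\kappa)$. Since this scalar depends on $\kappa$, the statement is a profile correspondence and not an estimator identity, as claimed.

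The only delicate step is making the limit precise. I would state it pointwise in the realized $(\pool,\kappa)$, with the policy varying while the draw is held fixed. This is legitimate because~($\star$) is a finite expression in $p$ with all $q_i(\kappa)>0$. I would not attempt a distributional limit of the random draw. I expect no further obstacle beyond these routine checks: continuity of $P_{\WOR}$, and the fact that ties do not change the counts under a consistent representative rule.
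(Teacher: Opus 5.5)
Your proposal is correct and follows essentially the same route as the paper. For (i) you read $A_j=E[0]=1$ and $B_j=G[0]=0$ off the $K=1$ specialization of the dynamic program. For (ii) you use equalized $q_i(\kappa)\to q(\kappa)$ and uniform $P_{\WOR}(S)\to\binom MK^{-1}$, which give the common weight $\binom MK^{-1}q(\kappa)^{-K}$, and then count $\binom{i-1}{K-1}$ subsets per sorted maximizer. Your pointwise-in-the-realized-draw framing of the limit, the direct singleton cross-check for (i), and the explicit tie-handling are details the paper leaves implicit; they are harmless additions.
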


\begin{proof}
(i) is immediate from the $K{=}1$ specialization of the DP ($E[0]{=}1$, $G[0]{=}0$).

(ii) In the uniform-policy limit $p_i\to 1/M$ we have $\phi_i\to-\log M$ for every $i$, so
by~\eqref{eq:incl} the inclusion probabilities equalize,
\[
q_i(\kappa)\;\longrightarrow\;q(\kappa)=1-\exp\!\big(-e^{-\kappa}/M\big),
\]
independent of $i$. Simultaneously, a Plackett--Luce draw with equal weights makes every
ordered $K$-tuple of distinct items equiprobable, so each size-$K$ \emph{set} has
\[
P_{\WOR}(S)=K!\,\frac{(M-K)!}{M!}=\binom MK^{-1},
\]
uniform over subsets, the exchangeable, without-replacement analogue of an i.i.d.\
pool. It is this \emph{joint} degeneration ($P_{\WOR}$ becoming uniform \emph{and} the
$q_i$ equalizing together), not the equal-inclusion condition alone, that makes the
per-subset weight in~($\star$) collapse to a single constant, the same for \emph{every}
retained subset $S\subseteq\pool$:
\[
\frac{P_{\WOR}(S)}{\prod_{i\in S}q_i(\kappa)}\;\longrightarrow\;
w(\kappa):=\frac1{\binom MK\,q(\kappa)^{K}} .
\]
Therefore
\[
\Jhat\;\longrightarrow\;w(\kappa)\!\!\sum_{\substack{S\subseteq\pool\\|S|=K}}\!\!\max_{i\in S}R_i
\;=\;\Big(w(\kappa)\tbinom nK\Big)\cdot
\underbrace{\binom nK^{-1}\!\!\sum_{\substack{S\subseteq\pool\\|S|=K}}\!\!\max_{i\in S}R_i}_{\text{PKPO order-statistic average over the pool}} .
\]
Sorting the pool $R_{(1)}\le\dots\le R_{(n)}$, item $(i)$ is the maximizer of exactly
$\binom{i-1}{K-1}$ of the $\binom nK$ subsets, so the bracketed average equals
$\binom nK^{-1}\sum_i\binom{i-1}{K-1}R_{(i)}$, precisely PKPO's order-statistic weights
$\binom{i-1}{K-1}\binom nK^{-1}$ \citep{pkpo2025}. This is exactly the order-statistic
profile of claim~(ii): in the uniform-policy limit $\Jhat$ assigns PKPO's weights to the
sorted pool rewards, up to the single common scalar $w(\kappa)\binom nK$.
\end{proof}

\section{Numerical certificates on finite models}
\label{sec:numerics}

\begin{itemize}
\item \textbf{Unbiasedness (Thm.~\ref{thm:unbiased}) and gradient
(Prop.~\ref{prop:gradient}).} We validate $\E[\Jhat]=\JW$ and
$\E[\widehat g]=\nabla_\theta\JW$ against the exact-enumeration ground truth by a
\emph{deterministic} Gauss--Legendre quadrature over the priority $\kappa$ (not a
Monte-Carlo average). The shipped fine-grid certificate uses $M=5$, $n=3$,
$K\in\{2,3\}$, logits $(0.3,-0.2,0.1,-0.1,0.4)$, and rewards
$(0,1,2,4,10)$. At a $32{\times}16$-panel $\kappa$-grid, the value error, gradient
relative $L_2$, and sampler-mass error are each below $10^{-11}$; refining the coarse
$16{\times}10$ grid reduces the residual by at least two orders of magnitude. This
certificate checks the theorem implementation but is not its proof. Dropping the
sampler-score term produces relative gradient error above $10\%$ on the declared cell,
confirming that term is essential. Separately, the degree-$1$ Horvitz--Thompson total
$\E[\sum_{i\in\pool}a_i/q_i(\kappa)]=\sum_i a_i$ is confirmed under the \emph{actual}
Gumbel-Top-$n$ sampler by a $4\times10^{5}$-draw Monte-Carlo at $(M,n)=(6,3)$ to
$\pm3\%$, and the estimator-weight second moment is verified finite for $n\ge2K$ and
divergent for $n<2K$ (Remark~\ref{rmk:variance}). The three pointwise score bounds
and the master gradient inequality behind Proposition~\ref{prop:gradmoment} are
verified by autodiff on randomized draws and on threshold grids far outside the
typical sampling range.\footnote{The value and gradient
identity (coarse- and fine-grid), poly-vs-brute agreement on a broader configuration
and reward grid, the second-moment diagnostic, the gradient-moment pointwise
bounds, the variance comparison
(Table~\ref{tab:variance}), the observations of \S\ref{sec:where}, and
Figure~\ref{fig:bias} are reproduced by the validation suite shipped as
\emph{ancillary files} with this submission: from the \texttt{anc/} directory, run
\texttt{python3 -m pytest tests} (see \texttt{anc/README.md} for the per-claim test map
and the Table~\ref{tab:variance} / Figure~\ref{fig:bias} scripts). Wall-clock figures
are single-CPU-core measurements and are hardware-dependent.}
\item \textbf{Exact collapse (Thm.~\ref{thm:tractable}).} The $O(nKQ)$ form matches the
brute-force sum to machine precision ($\lesssim3\times10^{-13}$ across the validated
grid; Table~\ref{tab:tractable}). At the feasibility boundary $(10,5)$ it runs roughly
$650\times$ faster than brute force (median $701$~ms vs.\ $1.1$~ms over repeated
single-core runs; hardware-dependent), and stays in the millisecond range where brute
force is infeasible.
\item \textbf{Variance diagnostic vs.\ generic REINFORCE
(Appendix~\ref{app:variance}).} A deliberately small finite-model diagnostic (one
random reward/logit geometry family, $K=2$ only; a sanity check, not a regime study)
confirms the qualitative picture at equal reward-evaluation budget: the shared-value
score surrogate is \emph{biased} throughout (median relative bias $0.6$--$1.2$);
joint-score REINFORCE and the ideal HT estimator are both unbiased (the experiment uses
the certified finite-$Q$ implementation); the HT arm's
cost-normalized variance is lower once the pool is oversampled ($K/n\le\tfrac13$) and
is worse and heavy-tailed at its $K/n{=}\tfrac12$ finite-variance boundary, exactly
as Remark~\ref{rmk:variance} predicts. Two caveats are integral to reading it: the
cost normalization counts \emph{reward evaluations}, the right unit only when reward
evaluation dominates the step cost, and per-draw wall-clock is $\approx10\times$
higher for the HT arm at these sizes (the DP). Protocol, full table, and tail
diagnostics are in Appendix~\ref{app:variance} (Table~\ref{tab:variance}). The HT
form's primary value remains its exact, reusable structure and its role as the base
for the companion paper's Rao--Blackwellization; its variance case is
regime-dependent, not uniform.
\end{itemize}

\begin{table}[!tbp]
\centering
\caption{Collapsed estimator (Thm.~\ref{thm:tractable}) vs.\ the brute-force
$\binom nK K!$ sum. Relative error is machine-precision; wall clocks are medians [IQR]
over $15$ warm-started single-core repetitions (Apple~M4, float64, $Q=96$;
hardware-dependent). ``---'' marks configurations where brute force is infeasible
(brute force at $(10,5)$: median $701$~ms, IQR $[655,718]$).}
\label{tab:tractable}
\begin{tabular}{lccc}
\toprule
$(n,K)$ & rel.\ error & brute-force terms & collapsed wall (ms) \\
\midrule
$(3,2)\dots(10,5)$ & $\lesssim 3\times10^{-13}$ & $6\dots 3\times10^{4}$ & $\le1.07$ $[0.89,1.82]$ \\
$(16,8)$ & --- (brute infeasible) & $5.2\times10^{8}$ & $1.62$ $[0.91,1.99]$ \\
$(50,10)$ & --- & $3.7\times10^{16}$ & $3.91$ $[3.36,4.71]$ \\
\bottomrule
\end{tabular}
\end{table}

\section{Related work}
\label{sec:related}

Three contrasts position the note; a closing paragraph supplies application-domain
context.

\paragraph{(1) i.i.d.\ Max@$K$ sample reuse.}
The unbiased \textbf{i.i.d.} Max@$K$ estimator is not ours: PKPO \citep{pkpo2025}
derives the order-statistic weights $\binom{i-1}{K-1}\binom nK^{-1}$, the unbiased
gradient, $K$-train-vs-$K$-eval tunability, and $K$-annealing; RSPO \citep{rspo2025}
the risk-seeking variant (in the lineage of risk-seeking policy gradients,
\citealp{petersen2021dsr}); MaxPO \citep{maxpo2026} the finite-batch advantage and a
Leave-Two-Out baseline, explicitly deferring ``settings with correlated
generations''; OrderGrad \citep{ordergrad2026} unbiased gradients for all order
statistics; BoNW \citep{bonw2025} an on-policy max@$K$ gradient with an off-policy
extension; Advantage Shaping \citep{advshaping2026} the unifying surrogate-reward
view. All of these draw $n$ responses i.i.d.\ (with replacement) and only then
sub-sample size-$K$ subsets of the \emph{realized} batch without replacement: a
$U$-statistic device targeting $\JIID$, not a change of sampler. Our estimand $\JW$
and estimator sit outside that setup.

\paragraph{(2) Generic WOR policy gradients and priority sampling (mean objective).}
Unbiased gradients for exactly samplable WOR draws \emph{per se} are classical:
joint-score REINFORCE on the Plackett--Luce draw, variance-reduced by
\citet{gadetsky2020plackett}; \citet{kool2019sbs,kool2020unordered} supply the
Gumbel-Top-$k$ / SBS sampler, WOR-trained REINFORCE on TSP, and the unbiased WOR
estimator for the \textbf{mean}. The priority-sampling / rank-conditioning lineage
\citep{duffield2007priority,cohen2009varopt,cohen2008bottomk} estimates additive WOR
subset sums, and graph priority sampling forms unbiased \emph{products} of edge
estimators for subgraph counts \citep{ahmed2017gps}: the higher-order
Horvitz--Thompson precedent we credit directly. A meta-item reduction to
\citet{kool2020unordered} exists in principle (take the $\binom MK$ subsets as
categorical items with $p(x)=P_{\WOR}(S)$ and $f(x)=\max_{i\in S}R_i$), but it
requires sampling subset-valued outcomes and evaluating their probabilities, and it
neither reuses the $\binom nK$ subsets embedded in one item-level pool nor supplies
the $O(nK)$-per-node collapse; we reuse Kool et al.'s exponential-race derivation
technique, not their general arbitrary-function principle. A complementary line
relaxes the subset draw to get pathwise gradients \citep{xie2019subsets,paulus2020sst}:
biased for the discrete objective (vanishing only in the zero-temperature limit) and
targeting no Max@$K$ sample reuse, so the two lines trade bias for variance and
neither subsumes the other.

\paragraph{(3) The delta of this note.}
Item-level pool reuse plus a Max-specific collapse: a sample-reuse ($n>K$) estimator
of $\JW$ for threshold-observed Gumbel-Top-$n$ / Plackett--Luce sampling,
gradient-side, \textbf{unbiased} (Theorem~\ref{thm:unbiased},
Proposition~\ref{prop:gradient}), with an exact subset-sum collapse and fixed-$Q$
$O(n\log n+nKQ)$ evaluation (Theorem~\ref{thm:tractable}), a certified $n\ge2K$
second-moment regime for both the objective terms and the full gradient
(Remark~\ref{rmk:variance}, Proposition~\ref{prop:gradmoment}), reducing to
Kool / Duffield at $K=1$ (an exact estimator identity) and to PKPO's order-statistic
weight profile (up to a positive scalar) in the uniform-policy limit (both proved,
Proposition~\ref{prop:reductions}). Generic joint-score REINFORCE already covers
WOR~$\times$~max; the narrow contribution is the rank-conditioned reuse of a larger
pool and the exact Max-specific collapse.

\paragraph{Application-domain context.}
In NCO, POMO \citep{kwon2020pomo} and PolyNet \citep{hottung2025polynet} optimize
the \emph{mean} over diverse rollouts; Poppy \citep{grinsztajn2023poppy} and Leader
Reward \citep{wang2024leaderreward} optimize best-of-$K$ over \textbf{independent}
decoder populations / multi-starts, valid precisely because of independence
(Observation~\ref{obs:pomo}); Gumbeldore \citep{pirnay2024gumbeldore} trains from
WOR pools by self-improvement \emph{imitation}, not a policy gradient. In LLM
alignment, BOND \citep{sessa2024bond} and vBoN \citep{amini2025vbon} own
inference-aware best-of-$N$ under i.i.d.\ sampling; QuasiMoTTo
\citep{li2026quasimotto} runs correlated-sampler (QMC-lattice) policy-gradient RL
with an unbiased eval-side pass@$K$: a correlated sampler \emph{without} the
closed-form inclusion structure our weights need; CPPO \citep{li2026cppo} optimizes
a joint-policy pass@$K$ surrogate rather than an exogenous WOR Max@$K$ gradient;
Dr.GRPO \citep{liu2025drgrpo} owns the divide-by-std critique. Neither domain is
theoretically necessary to the note; both motivate it.

\section{Discussion and open problems}
\label{sec:discussion}

\begin{itemize}
\item \textbf{The bias sweep.} Whether the correction matters depends on the draw ratio
$K/n$ and the policy's heterogeneity; the illustrative sweep of
Appendix~\ref{app:biasmap} (Figure~\ref{fig:bias}) covers one reward/policy geometry
and is not a validated decision rule. A genuine
use-or-skip map (multiple $M,n$, random policy directions and reward geometries, with
medians and worst-case quantiles) is future work.
\item \textbf{Sequence scale.} Corollary~\ref{cor:sequence} packages the
sequence-scale case: the finite-support results hold for any finite, full-support,
normalized differentiable policy, and Corollary~\ref{cor:poolonly} shows the
estimator, DP, and sampler score require only the values and differentiable
computation graphs of the drawn sequences' probabilities plus
$\sum_y p_\theta(y)=1$; exact SBS supplies exactly those, and the support is never
enumerated. Countably infinite or unbounded variable-length support remains open:
there one must separately establish an integrable envelope for both the estimator and
the policy score.
\item \textbf{Why an estimator where full-support deterministic computation is available.} In the flat
enumerable regime the objective $\JW$ and its gradient admit the same one-dimensional
integral as \S\ref{sec:tractable}, run over the full item set with deterministic factors
$q_i=1$. Its finite-$Q$ evaluation costs $O(MKQ)$ and has zero Monte Carlo variance but
retains quadrature error. This is an algebraic
full-support specialization (or the $q_i\to1$ limit), not a threshold draw covered by
Theorem~\ref{thm:unbiased}, which requires $M\ge n+1$. The single-draw sample-reuse
estimator is therefore \emph{not} the
cheapest route to the gradient when the full support is affordable; its purpose is
compute amortization. It pays off when $M\gg n$ or when \emph{reward evaluation}
dominates: reusing the decoder's already-drawn $n$-item pool amortizes the $M-n$ reward
evaluations the exact sum would require and shrinks the per-step DP from $O(MKQ)$ to
$O(nKQ)$, trading those reward evaluations for the controlled variance of
Remark~\ref{rmk:variance}; per the diagnostic of Appendix~\ref{app:variance}
(Table~\ref{tab:variance}), that trade is favorable
only in specific regimes (practical baseline, oversampled pool). The raison d'\^etre is
fully realized only at structured / sequence scale, where full-support enumeration is
unavailable and the pool-only forms of Corollary~\ref{cor:poolonly} are the ones a
decoder can actually evaluate.
\item \textbf{Numerics.} The combinatorial collapse of Theorem~\ref{thm:tractable} is
algebraically \emph{exact}; the sole approximation is the Gauss--Laguerre quadrature of
the resulting one-dimensional integral. The strictly positive tail rate
$1-P_S+p_{\min,S}>0$ proves that the ideal integral is finite, including at $P_S=1$;
it is not, by itself, a certified finite-$Q$ Gauss--Laguerre error theorem. The
ancillary suite therefore reports $Q$-refinement and poly-vs-brute residuals on a
declared configuration grid, while a uniform value-and-gradient bound remains open. A
near-degenerate, highly concentrated policy makes the $1/\prod q$ prefactor large and
can overflow $h_i=(e^{p_it}-1)/q_i$ before the tail decays. Strict mode refuses such a
configuration; optional node dropping is explicitly biased.
\item \textbf{Toward a low-variance estimator.} The Horvitz--Thompson form is unbiased
but its second moment diverges in the near-exhaustive corner (Remark~\ref{rmk:variance}).
A natural research direction is to \emph{Rao--Blackwellize} it over the unordered pool set (the
Max@$K$ analogue of the unordered-set estimator of \citet{kool2020unordered}), which
forms no $1/\prod q$ weight. Defining that variant, proving its unbiasedness, and
benchmarking its variance belongs to separate companion work and is out of scope for
this theory-and-certification note.
\item \textbf{Benchmark.} A pre-registered, multi-seed, seed-level-inference NCO study
of Max@$K$ training under \emph{realized} best-of-$K$, the empirical companion, is
in preparation.
\end{itemize}

\paragraph{Reproducibility.}
The ancillary directory \texttt{anc/} contains the reference implementation, pinned
requirements, claim-to-test map, the finite-model certificates, the structured-SBS
certificate, and scripts for Figure~\ref{fig:bias} and Table~\ref{tab:variance}. From
that directory, \texttt{python3 -m pytest tests -q} runs the complete CPU suite. Strict
no-clamp/no-truncation numerics are the public scientific default; the explicitly biased
defensive mode is opt-in and tested separately (Appendix~\ref{app:implementation}). The
ancillary code is MIT-licensed (\texttt{anc/LICENSE}); the arXiv source archive serves
as the immutable code snapshot for this note, and a maintained copy lives in the
authors' repository,
\url{https://github.com/midhunxavier/rank-conditioned-sample-reuse}
(directory \texttt{anc/}; release tag
\texttt{coupled-maxk-arxiv-v1} pins this submission).

\appendix

\section{Full proof of Proposition~\ref{prop:gradient}}
\label{app:gradproof}

\begin{proof}[Proof of Proposition~\ref{prop:gradient}]
Since $\sgop$ changes only the autodiff derivative and not the value,
$\E[\widehat g]=\E\big[\nabla_\theta\Jhat+\Jhat\,\nabla_\theta\log f_\theta\big]$.
Put $\Phi(\theta):=\E_{f_\theta}[\Jhat(\pool,\kappa)]$. Theorem~\ref{thm:unbiased}
holds at \emph{every} full-support $\theta$, so $\Phi\equiv\JW$ as functions of
$\theta$ and hence $\nabla_\theta\Phi=\nabla_\theta\JW$. If differentiation under the
sampler expectation is licensed, the total-derivative (score-function) identity gives
\[
\nabla_\theta\Phi=\int\nabla_\theta(\Jhat f_\theta)
=\int f_\theta\big(\nabla_\theta\Jhat+\Jhat\,\nabla_\theta\log f_\theta\big)
=\E_{f_\theta}\big[\nabla_\theta\Jhat+\Jhat\,\nabla_\theta\log f_\theta\big],
\]
the middle equality holding pointwise because $f_\theta>0$ on its domain. The
proposition is therefore \emph{exactly} the claim that this interchange is valid; the
remainder builds an integrable dominating envelope that justifies it.

\emph{Exponential-clock law of the draw.} A realized draw is
$\omega=(P,m,\tau)$: an $n$-subset pool $P$, a threshold item $m\notin P$, and
$\tau=e^{-\kappa}=T_{(n+1)}\in(0,\infty)$, with reference measure ``counting on
$(P,m)$'' $\otimes$ ``Lebesgue $d\tau$''. In the exponential-clock coordinates of
Theorem~\ref{thm:unbiased}, $T_i=e^{-G_i}\sim\Expo(p_i)$ independently, the pool is the
$n$ smallest clocks, $\tau$ is the $(n{+}1)$-st, and
$q_i(\kappa)=\Pr(T_i<\tau)=1-e^{-p_i\tau}$. The joint density is
\[
f_\theta(P,m,\tau)=\underbrace{\prod_{i\in P}\big(1-e^{-p_i\tau}\big)}_{\text{pool clocks}<\tau}\;
\underbrace{p_m e^{-p_m\tau}}_{\text{threshold clock}=\tau}\;
\underbrace{\prod_{j\notin P\cup\{m\}}\!\!e^{-p_j\tau}}_{\text{far clocks}>\tau},\qquad\tau>0 .
\]
Its support, all $\binom Mn(M{-}n)$ pairs $(P,m)$ and all $\tau>0$, is fixed
independently of $\theta$, since the policy has full support. Thus $\nabla_\theta$
passes through the \emph{finite} $(P,m)$-sum termwise (this is point~(a): the discrete
part interchanges trivially), and only the $1$-D $\tau$-integral needs domination.
Passing to the note's $\kappa=-\log\tau$ multiplies $f_\theta$ by the $\theta$-free
Jacobian $\tau$, so $\nabla_\theta\log f_\theta$ is identical in either coordinate and
equals the sampler score of \S\ref{sec:prelim}.

\emph{The HT weight cancels the pool mass.} Substituting~($\star$) and
$q_i=1-e^{-p_i\tau}$, the $1/\prod_{i\in S}q_i$ blow-up of the estimator (as
$\tau\to0$, $q_i\sim p_i\tau$) is annihilated by the pool factor
$\prod_{i\in P}(1-e^{-p_i\tau})$ of $f_\theta$, the same cancellation that makes the
crux identity equal $1$:
\[
\Jhat\,f_\theta=\!\!\sum_{\substack{S\subseteq P\\|S|=K}}\!\!
\underbrace{\Big(\max_{i\in S}R_i\Big)\,p_m\,P_{\WOR}(S)\,
e^{-p_m\tau}\!\!\prod_{j\notin P\cup\{m\}}\!\!e^{-p_j\tau}\!\!\prod_{i\in P\setminus S}\!\!\big(1-e^{-p_i\tau}\big)}_{=:~\mathrm{Term}_S(P,m,\tau;\theta)} .
\]
Every factor of $\mathrm{Term}_S$ lies in $[0,1]$ except the constants $p_m\le p_{\max}$,
$P_{\WOR}(S)\le1$, $|\max_S R|\le R_{\max}$; and
$e^{-p_m\tau}\prod_{j\notin P\cup\{m\}}e^{-p_j\tau}=e^{-(1-P_P)\tau}$ with
$P_P=\sum_{i\in P}p_i$. Because $M\ge n+1$ there is at least one non-pool item, so the
decay rate is strictly positive: $1-P_P=\sum_{j\notin P}p_j\ge(M-n)p_{\min}>0$, whence
\begingroup\renewcommand{\theHequation}{eqenvA}
\begin{equation}
|\mathrm{Term}_S|\;\le\;p_{\max}R_{\max}\,e^{-(M-n)p_{\min}\,\tau}. \tag{A}\label{eq:envA}
\end{equation}
\endgroup

\emph{A dominating envelope on a compact neighborhood} (point~(b)). Fix a compact
neighborhood $\Theta_0\ni\theta$ small enough that
$p_i(\theta')\in[p_{\min},p_{\max}]\subset(0,1)$ for every item and every
$\theta'\in\Theta_0$; this is possible because the finite policy is continuous and has
full support at $\theta$. The finitely many maps $\theta'\mapsto p_i(\theta')$ are
continuously differentiable, so compactness gives one finite constant $L$ with
$\|\nabla_{\theta'}p_i\|\le L$ uniformly over $i$ and $\Theta_0$.
$\mathrm{Term}_S$ depends on $\theta'$ through the prefactors $p_m$ and $P_{\WOR}(S)$ and
through the exponential factors $e^{-p_i\tau}$, $(1-e^{-p_i\tau})$ (the reward
$\max_S R$ is constant). The prefactors are smooth on $\Theta_0$ with uniformly bounded
derivatives, $\|\nabla_{\theta'}p_m\|\le L$ and $\|\nabla_{\theta'}P_{\WOR}(S)\|\le C_{PL}$
($P_{\WOR}$ is a finite rational function of the $p_i$ whose Plackett--Luce denominators
stay $\ge(M-K+1)p_{\min}>0$ on $\Theta_0$), so their product-rule contributions carry no
factor of $\tau$ and populate the $\tau$-independent part of the envelope. Differentiating
the exponential factors, by the product rule,
$\partial_{\theta'_\ell}e^{-p_i\tau}=-\tau(\partial_{\theta'_\ell}p_i)e^{-p_i\tau}$ and
$\partial_{\theta'_\ell}(1-e^{-p_i\tau})=\tau(\partial_{\theta'_\ell}p_i)e^{-p_i\tau}$,
pulls down a single factor $\tau$ times a coefficient bounded by $L$, while the
remaining factors stay in $[0,1]$ (or are bounded by the constants $p_{\max},R_{\max}$).
Summing the $O(M)$ product-rule contributions and using~\eqref{eq:envA} for the
undifferentiated envelope, there is a constant $C=C(M,n,R_{\max},L,p_{\max},C_{PL})$,
uniform over $\Theta_0$ (with $C$ and the decay rate depending on the ball $\Theta_0$
through its bounds $p_{\min},p_{\max}$; any ball around the interior point $\theta$
suffices), with
\[
\sup_{\theta'\in\Theta_0}\big\|\nabla_{\theta'}\mathrm{Term}_S(P,m,\tau;\theta')\big\|
\;\le\;C\,(1+\tau)\,e^{-(M-n)p_{\min}\tau}\;=:\;g(\tau).
\]
Summing over the finitely many $(S,P,m)$ (at most $\binom Mn(M-n)\binom nK$ of them)
yields a single $\theta'$-independent bound
$\big\|\nabla_{\theta'}(\Jhat f_{\theta'})\big\|\le\binom Mn(M-n)\binom nK\,g(\tau)$
valid throughout $\Theta_0$. Since
$\int_0^\infty(1+\tau)\,e^{-(M-n)p_{\min}\tau}\,d\tau<\infty$, this envelope is
Lebesgue-integrable in $\tau$ and independent of $\theta'$ on $\Theta_0$. The standard
Leibniz rule for differentiating under the integral sign (dominated convergence)
therefore licenses
\[
\nabla_{\theta'}\!\int_0^\infty(\Jhat f_{\theta'})\,d\tau
=\int_0^\infty\nabla_{\theta'}(\Jhat f_{\theta'})\,d\tau
\qquad\text{for all }\theta'\in\Theta_0,
\]
in particular at $\theta'=\theta$. Combined with the termwise discrete $(P,m)$-sum of
point~(a), this is precisely the interchange
$\nabla_\theta\Phi=\E_{f_\theta}[\nabla_\theta\Jhat+\Jhat\,\nabla_\theta\log f_\theta]$
that was the sole remaining gap. With $\nabla_\theta\Phi=\nabla_\theta\JW$ from the
first paragraph, we conclude
$\E[\widehat g]=\E[\nabla_\theta\Jhat+\Jhat\,\nabla_\theta\log f_\theta]=\nabla_\theta\JW$.
\end{proof}

\section{Proof of Proposition~\ref{prop:gradmoment}}
\label{app:gradmoment}

Work in the exponential-clock coordinates and notation of
Appendix~\ref{app:gradproof}: a realized draw is $\omega=(P,m,\tau)$ with pool $P$
($|P|=n$), threshold item $m$, and threshold clock $\tau=e^{-\kappa}\in(0,\infty)$;
$q_i=1-e^{-p_i\tau}$; and $f_\theta(P,m,\tau)$ is the joint density displayed there.
Because the $\kappa\mapsto\tau$ change of coordinates has a $\theta$-free Jacobian,
$\nabla_\theta\log f_\theta$ is the same in either coordinate. Throughout,
$\nabla_\theta$ acts at fixed $(P,m,\tau)$ through $p(\theta)$ only (the
detached-threshold convention of \S\ref{sec:prelim}), and, value-wise,
\[
\widehat g=\nabla_\theta\Jhat+\Jhat\,\nabla_\theta\log f_\theta,
\qquad
\Jhat=\sum_{\substack{S\subseteq P\\|S|=K}}\Big(\max_{i\in S}R_i\Big)
P_{\WOR}(S)\,W_S,
\qquad
W_S=\frac{\ind\{S\subseteq P\}}{\prod_{i\in S}q_i}.
\]
Fix $\theta$; let $p_{\min}=\min_ip_i>0$, $p_{\max}=\max_ip_i<1$, and
$L=\max_i\|\nabla_\theta\log p_i\|<\infty$ (finitely many continuously
differentiable coordinates at a fixed interior point).

\begin{lemma}[uniform score bounds]
\label{lem:scorebounds}
For every $\tau>0$, every item $i$, every $K$-subset $S$, and every configuration
$(P,m)$:
\begin{itemize}
\item[(i)] $\nabla_\theta\log q_i=\varphi(p_i\tau)\,\nabla_\theta\log p_i$ with
$\varphi(x)=x/(e^x-1)\in(0,1)$ for $x>0$; hence
$\sup_{\tau>0}\|\nabla_\theta\log q_i\|\le L$.
\item[(ii)] $\|\nabla_\theta\log P_{\WOR}(S)\|\le 2KL$.
\item[(iii)] $\|\nabla_\theta\log f_\theta(P,m,\tau)\|
\le L\big(n+1+\tau(1-P_P)\big)\le L\,(n+1+\tau)$, where $P_P=\sum_{i\in P}p_i$.
\end{itemize}
\end{lemma}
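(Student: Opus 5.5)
I will prove the three bounds separately by direct differentiation at fixed $(P,m,\tau)$, always writing each log-gradient as a weighted combination of the item scores $\nabla_\theta\log p_j$. Since each such score has norm at most $L$, the triangle inequality then gives the stated constants.

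\emph{(i).} From $q_i=1-e^{-p_i\tau}$ I get
\[
\nabla_\theta q_i=\tau e^{-p_i\tau}\nabla_\theta p_i=\tau p_i e^{-p_i\tau}\nabla_\theta\log p_i .
\]
Dividing by $q_i$ gives the factor $\tau p_i e^{-p_i\tau}/(1-e^{-p_i\tau})$. Writing $x=p_i\tau$, this factor is $x/(e^x-1)=\varphi(x)$. The inequality $e^x-1>x$ for $x>0$ gives $\varphi(x)<1$, and positivity is clear, so $\|\nabla_\theta\log q_i\|\le L$ uniformly in $\tau$.

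\emph{(ii).} I write $P_{\WOR}(S)=\sum_{\sigma}\prod_{k=1}^K p_{\sigma_k}/D_k^\sigma$ over orderings $\sigma$ of $S$, where $D_k^\sigma=1-\sum_{l<k}p_{\sigma_l}$ is the remaining mass, a sum of $p_j$ over items $j$ not yet drawn. Then
\[
\nabla\log P_{\WOR}(S)=\sum_\sigma \pi_\sigma\,\nabla\log\Big(\prod_k p_{\sigma_k}/D^\sigma_k\Big),
\]
with convex weights $\pi_\sigma$ proportional to each ordering's probability. For each ordering, the numerator contributes $K$ item scores. Each denominator has $\nabla\log D=\sum_j (p_j/D)\nabla\log p_j$, which is a convex combination and so has norm at most $L$. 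There are $K$ such terms, so each ordering's log-gradient has norm at most $2KL$, and so does the convex combination. The one point to check is that $D_1=1$ is a genuine sum over all items, so it needs no special treatment; its gradient is simply zero.

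\emph{(iii).} I take the density from Appendix~\ref{app:gradproof}, using its collapsed form from Corollary~\ref{cor:poolonly}:
\[
\log f=\sum_{i\in P}\log q_i+\log p_m-p_m\tau-\tau(1-P_P-p_m).
\]
This simplifies to $\sum_{i\in P}\log q_i+\log p_m-\tau(1-P_P)$. Its gradient has three parts:
\begin{itemize}
\item $n$ terms bounded by $L$ each, by (i);
\item one term $\nabla\log p_m$ bounded by $L$;
\item $\tau\nabla(1-P_P)=-\tau\sum_{j\notin P}p_j\nabla\log p_j$, whose norm is at most $\tau L\sum_{j\notin P}p_j=\tau L(1-P_P)$.
\end{itemize}
Summing gives $L(n+1+\tau(1-P_P))\le L(n+1+\tau)$.

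The only mildly delicate step is (ii). There the sign structure of the denominators must be handled via the convex-combination view rather than through the rational-function derivative, so that the bound stays independent of $p_{\min}$.
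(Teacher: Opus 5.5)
Your proof is correct and follows the paper's argument essentially step for step: the same $\varphi(p_i\tau)$ factorization in (i), the same sum over orderings with convex-combination denominator scores giving $2KL$ in (ii), and the same triangle-inequality count $nL+L+\tau L(1-P_P)$ in (iii). The only blemish is a harmless sign slip in (iii). Since $1-P_P=\sum_{j\notin P}p_j$, you have $\nabla(1-P_P)=+\sum_{j\notin P}p_j\nabla\log p_j$; the minus sign belongs to the term $-\tau(1-P_P)$ of $\log f$. This does not change the norm bound.
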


\begin{proof}
(i) $\nabla_\theta q_i=\tau e^{-p_i\tau}\,\nabla_\theta p_i
=p_i\tau e^{-p_i\tau}\,\nabla_\theta\log p_i$; dividing by $q_i=1-e^{-p_i\tau}$
gives the scalar prefactor
$p_i\tau e^{-p_i\tau}/(1-e^{-p_i\tau})=p_i\tau/(e^{p_i\tau}-1)=\varphi(p_i\tau)$.
For $x>0$, $e^x-1>x$ gives $0<\varphi(x)<1$. The bound is uniform in $\tau$: the
$1/q_i$ blow-up as $\tau\to0$ affects the weight $W_S$, never the score
$\nabla_\theta\log q_i$.

(ii) Write $P_{\WOR}(S)=\sum_\sigma g_\sigma$ over the $K!$ orderings $\sigma$ of
$S$, with $g_\sigma=\prod_{j=1}^{K}p_{\sigma_j}/D_{\sigma,j}$ and
$D_{\sigma,j}=1-\sum_{l<j}p_{\sigma_l}
=\sum_{i\notin\{\sigma_1,\dots,\sigma_{j-1}\}}p_i>0$ (the remaining mass, a
nonempty sum since $K<M$). Each denominator's log-gradient is a convex combination
of item scores,
\[
\nabla_\theta\log D_{\sigma,j}
=\!\!\sum_{i\notin\{\sigma_1,\dots,\sigma_{j-1}\}}\!\!\frac{p_i}{D_{\sigma,j}}\,
\nabla_\theta\log p_i,
\qquad
\big\|\nabla_\theta\log D_{\sigma,j}\big\|\le L,
\]
so $\|\nabla_\theta\log g_\sigma\|\le\sum_{j=1}^{K}(L+L)=2KL$. A finite sum of
positive terms preserves the bound, since
$\nabla_\theta\log\sum_\sigma g_\sigma
=\sum_\sigma\big(g_\sigma/\sum_{\sigma'}g_{\sigma'}\big)\,
\nabla_\theta\log g_\sigma$ is again a convex combination. Hence
$\|\nabla_\theta\log P_{\WOR}(S)\|\le2KL$.

(iii) From the density in Appendix~\ref{app:gradproof},
$\log f_\theta=\sum_{i\in P}\log q_i+\log p_m-p_m\tau
-\tau\sum_{j\notin P\cup\{m\}}p_j$. By (i),
$\nabla_\theta p_j=p_j\nabla_\theta\log p_j$, and the triangle inequality,
\[
\|\nabla_\theta\log f_\theta\|
\le nL+L+\tau L\Big(p_m+\!\!\sum_{j\notin P\cup\{m\}}\!\!p_j\Big)
=L\big(n+1+\tau(1-P_P)\big)\le L(n+1+\tau). \qedhere
\]
\end{proof}

\begin{lemma}[master pointwise bound]
\label{lem:master}
At every realized $(P,m,\tau)$,
\[
\|\widehat g\|\;\le\;R_{\max}L\,(3K+n+1+\tau)\sum_{|S|=K}P_{\WOR}(S)\,W_S .
\]
\end{lemma}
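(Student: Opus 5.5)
The plan is to write $\Jhat$ as a sum of per-subset terms, bound each term's gradient and its share of the score term pointwise using Lemma~\ref{lem:scorebounds}, and then add the pieces.

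\emph{Decomposition.} Write
\[
\Jhat=\sum_{S\subseteq P,\,|S|=K} r_S\,P_{\WOR}(S)\,W_S, \qquad r_S=\max_{i\in S}R_i .
\]
Here $r_S$ is $\theta$-free and $|r_S|\le R_{\max}$. On $\{S\subseteq P\}$ the weight $W_S=\prod_{i\in S}q_i^{-1}$ is positive and smooth in $\theta$ at fixed $(P,m,\tau)$. For each summand $a_S:=P_{\WOR}(S)\,W_S>0$ we use the logarithmic-derivative identity
\[
\nabla_\theta a_S=a_S\Big(\nabla_\theta\log P_{\WOR}(S)-\sum_{i\in S}\nabla_\theta\log q_i\Big).
\]
Lemma~\ref{lem:scorebounds}(ii) bounds the first score by $2KL$. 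Lemma~\ref{lem:scorebounds}(i) bounds each of the $K$ inclusion scores by $L$, uniformly in $\tau$. Together these give $\|\nabla_\theta a_S\|\le 3KL\,a_S$, and hence
\[
\|\nabla_\theta\Jhat\|\le R_{\max}\,3KL\sum_S a_S .
\]

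\emph{Score term.} By the triangle inequality, $\|\Jhat\,\nabla_\theta\log f_\theta\|\le |\Jhat|\,\|\nabla_\theta\log f_\theta\|$. Since every $a_S\ge0$, we have $|\Jhat|\le R_{\max}\sum_S a_S$. Lemma~\ref{lem:scorebounds}(iii) bounds the score norm by $L(n+1+\tau)$. Adding the two pieces gives exactly
\[
\|\widehat g\|\le R_{\max}L\,(3K+n+1+\tau)\sum_S a_S .
\]
The sum over $S\subseteq P$ equals the displayed sum over all $|S|=K$, because $W_S$ carries the indicator $\ind\{S\subseteq P\}$; subsets not contained in the pool contribute $0$ to both $\Jhat$ and its gradient.

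\emph{Main obstacle.} The only subtle step is justifying the logarithmic-derivative form. We must check that differentiating $W_S$ does not pick up a $1/q_i$ blow-up beyond the factor $W_S$ already present; this is precisely the content of Lemma~\ref{lem:scorebounds}(i). We must also make sure no $\tau$-dependence enters $\nabla_\theta\Jhat$ through the threshold, which the detached-$\kappa$ convention guarantees. Everything else is the triangle inequality together with nonnegativity of the $a_S$, which is what lets $|\Jhat|$ be bounded without tracking cancellations between reward signs.
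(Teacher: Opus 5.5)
Your proposal is correct and follows the paper's proof essentially step for step. It uses the same three ingredients: the per-subset logarithmic-derivative bound $\|\nabla_\theta(P_{\WOR}(S)W_S)\|\le 3KL\,P_{\WOR}(S)W_S$ from Lemma~\ref{lem:scorebounds}(i)--(ii), the bound $|\Jhat|\le R_{\max}\sum_S P_{\WOR}(S)W_S$, and Lemma~\ref{lem:scorebounds}(iii), combined by the triangle inequality. Your remark that subsets with $S\not\subseteq P$ contribute neither value nor gradient matches how the paper handles the sum over all $|S|=K$.
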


\begin{proof}
Differentiating one term of $\Jhat$ at fixed $(P,m,\tau)$ and applying
Lemma~\ref{lem:scorebounds}(i)--(ii),
\[
\Big\|\nabla_\theta\big(P_{\WOR}(S)\,W_S\big)\Big\|
=P_{\WOR}(S)\,W_S\,\Big\|\nabla_\theta\log P_{\WOR}(S)
-\sum_{i\in S}\nabla_\theta\log q_i\Big\|
\le 3KL\,P_{\WOR}(S)\,W_S,
\]
so $\|\nabla_\theta\Jhat\|\le3KLR_{\max}\sum_{S}P_{\WOR}(S)W_S$ (sums over all
$|S|=K$; a term with $S\not\subseteq P$ is identically zero as a function of
$\theta$ at this realization, so it contributes neither value nor gradient). With
$|\Jhat|\le R_{\max}\sum_SP_{\WOR}(S)W_S$ and Lemma~\ref{lem:scorebounds}(iii),
$\|\widehat g\|\le\|\nabla_\theta\Jhat\|+|\Jhat|\,\|\nabla_\theta\log f_\theta\|$
gives the claim. The three score bounds and this master inequality are verified
pointwise by autodiff on randomized draws, including thresholds far outside the
typical sampling range on both sides, in
\texttt{anc/tests/test\_gradient\_moment.py}.
\end{proof}

\begin{lemma}[weighted per-term moments]
\label{lem:weightedmoment}
Fix $|S|=K$ and a constant $a\ge0$. If $n\ge2K$ (and $M\ge n+1$) then
$\E\big[(a+\tau)^2W_S^2\big]<\infty$.
\end{lemma}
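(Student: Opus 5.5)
We follow the exponential-race conditioning from the proof of Theorem~\ref{thm:unbiased} and Remark~\ref{rmk:variance}. Fix $S$ and condition on the background clocks $B=\{T_j:j\notin S\}$, with order statistics $b_1<\dots<b_{M-K}$; write $b:=b_{n-K+1}$, which exists because $M\ge n+1$. On $\{S\subseteq P\}$ the threshold satisfies $\tau=b$, and $W_S=\ind\{\max_{i\in S}T_i<b\}/\prod_{i\in S}(1-e^{-p_ib})$. The factor $(a+\tau)^2$ therefore equals $(a+b)^2$ on the event where $W_S\neq0$, so it is $B$-measurable there. Integrating out the independent $S$-clocks gives the exact identity
\[
\E\big[(a+\tau)^2W_S^2\,\big|\,B\big]=\frac{(a+b)^2}{\prod_{i\in S}\big(1-e^{-p_ib}\big)} .
\]
This reduces the claim to showing $\E_B\big[(a+b)^2/\prod_{i\in S}(1-e^{-p_ib})\big]<\infty$, a one-dimensional question about the law of $b$.

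Next we bound the integrand deterministically. The map $x\mapsto(1-e^{-x})/x$ is decreasing, so for $b\le1$ we have $1-e^{-p_ib}\ge p_ib(1-e^{-p_{\min}})/p_{\min}\ge c\,b$ with $c>0$. For $b\ge1$ we have $1-e^{-p_ib}\ge1-e^{-p_{\min}}$. Hence the integrand is at most $C(a+1)^2b^{-K}$ on $(0,1]$ and at most $C(a+b)^2$ on $[1,\infty)$.

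It remains to control the law of $b$, the $(n-K+1)$-th smallest of the $M-K$ independent clocks $T_j\sim\Expo(p_j)$. Upper tail: $b$ is at most the maximum of the background clocks, which is dominated by a sum of independent exponentials, so all moments of $b$ are finite and the $[1,\infty)$ piece is finite. Lower tail (the main obstacle): we need $\E[b^{-K}\ind\{b\le1\}]<\infty$. We control the distribution function rather than the density. The event $\{b\le s\}$ requires at least $n-K+1$ background clocks to be $\le s$. A union bound over $(n-K+1)$-subsets gives
\[
\Pr(b\le s)\le\binom{M-K}{n-K+1}(p_{\max}s)^{n-K+1}.
\]
Then the layer-cake formula yields
\[
\E[b^{-K}\ind\{b\le1\}]\le\Pr(b\le1)+K\int_0^1 s^{-K-1}\Pr(b\le s)\,ds\lesssim 1+\int_0^1 s^{n-2K}\,ds .
\]
This is finite exactly when $n-2K>-1$, which holds because $n\ge2K$. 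Combining both pieces and averaging over $B$ proves the lemma. This is the same $\int\tau^{n-2K}$ count as in Remark~\ref{rmk:variance}, made rigorous through the union bound rather than the heuristic density.
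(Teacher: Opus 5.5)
Your argument is correct and follows the paper's proof. You use the same conditioning on the background clocks to reach $(a+b)^2/\prod_{i\in S}(1-e^{-p_ib})$, and you handle the near-zero region with the same small-ball estimate of order $b^{\,n-K+1}$ against the $b^{-K}$ blow-up; the only differences are cosmetic (you use the distribution function and the layer-cake formula where the paper bounds the density $f_b(t)\le C_1t^{\,n-K}$, and you dominate $b$ by $\sum_{j\notin S}T_j$ where the paper derives an exponential upper tail).

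There is one small slip. Since $x\mapsto(1-e^{-x})/x$ is decreasing and $p_i\ge p_{\min}$, the constant $(1-e^{-p_{\min}})/p_{\min}$ goes the wrong way; use $(1-e^{-p_{\max}})/p_{\max}$, or simply $1-e^{-x}\ge(1-e^{-1})x$ on $[0,1]$ to get $c=(1-e^{-1})p_{\min}$, and the conclusion $1-e^{-p_ib}\ge c\,b$ is unaffected.
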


\begin{proof}
Condition on the background clocks $B=\{T_j:j\notin S\}$ as in
Theorem~\ref{thm:unbiased} and Remark~\ref{rmk:variance}: on $\{S\subseteq P\}$
the threshold equals $b:=b_{n-K+1}$, the $(n{-}K{+}1)$-st smallest background
clock, which is $B$-measurable; the $S$-clocks are independent of $B$, and off the
inclusion event $W_S=0$. Hence
\[
\E\big[(a+\tau)^2W_S^2\,\big|\,B\big]
=(a+b)^2\,\frac{\Pr\big(S\subseteq P\mid B\big)}
{\prod_{i\in S}\big(1-e^{-p_ib}\big)^{2}}
=\frac{(a+b)^2}{\prod_{i\in S}\big(1-e^{-p_ib}\big)},
\]
and it suffices to bound
$\E_B\big[(a+b)^2\prod_{i\in S}(1-e^{-p_ib})^{-1}\big]$. Split at
$t_0:=1/p_{\max}\ (\ge1)$.

\emph{Near zero ($b\le t_0$).} Here $p_ib\le1$, and $1-e^{-x}\ge x-x^2/2\ge x/2$
on $[0,1]$, so $\prod_{i\in S}(1-e^{-p_ib})^{-1}\le 2^K p_{\min}^{-K}\,b^{-K}$.
The variable $b$ is the $(n{-}K{+}1)$-st order statistic of the $M-K$ independent
background clocks $T_j\sim\Expo(p_j)$; for $t\le t_0$ its density obeys
\begin{align*}
f_b(t)\;&\le\;\sum_{j\notin S}p_je^{-p_jt}\,
\Pr\big(\text{exactly $n-K$ of the others}<t\big)\\
&\le\;(M-K)\,p_{\max}\binom{M-K-1}{n-K}(p_{\max}t)^{\,n-K}
\;=:\;C_1\,t^{\,n-K},
\end{align*}
using $1-e^{-p_jt}\le p_jt\le p_{\max}t$ for each of the $n-K$ below-threshold
factors and $\le1$ for the survival factors. The near-zero contribution is
therefore at most
\[
(a+t_0)^2\,2^K p_{\min}^{-K}\,C_1\int_0^{t_0}t^{\,n-2K}\,dt\;<\;\infty,
\]
the integral converging precisely because $n\ge2K$ makes the exponent
nonnegative.

\emph{Tail ($b>t_0$).} Every factor obeys
$1-e^{-p_ib}\ge1-e^{-p_{\min}t_0}>0$, so
$\prod_{i\in S}(1-e^{-p_ib})^{-1}\le(1-e^{-p_{\min}t_0})^{-K}=:C_2$. Moreover
$b>t$ forces at least $(M-K)-(n-K)=M-n\ge1$ background clocks to exceed $t$, so a
union bound over which clocks those are gives
$\Pr(b>t)\le\binom{M-K}{M-n}\,e^{-(M-n)p_{\min}t}$: the threshold has an
exponential upper tail, hence $\E[(a+b)^2]<\infty$ and the tail contribution is at
most $C_2\,\E[(a+b)^2]<\infty$.

Combining the two ranges proves the lemma.
\end{proof}

\begin{proof}[Proof of Proposition~\ref{prop:gradmoment}]
Square Lemma~\ref{lem:master} and use
$(3K+n+1+\tau)^2\le2\big(9K^2+(n+1+\tau)^2\big)$:
\[
\|\widehat g\|^2\;\le\;2R_{\max}^2L^2\,\big(9K^2+(n+1+\tau)^2\big)
\Big(\sum_{|S|=K}P_{\WOR}(S)\,W_S\Big)^{\!2}.
\]
The weights $P_{\WOR}(\cdot)$ form a probability distribution over the
$K$-subsets, $\sum_{|S|=K}P_{\WOR}(S)=1$, so Jensen's inequality gives
$\big(\sum_SP_{\WOR}(S)W_S\big)^2\le\sum_SP_{\WOR}(S)\,W_S^2$ pointwise, and
therefore
\begin{align*}
\E\|\widehat g\|^2
\;&\le\;2R_{\max}^2L^2\sum_{|S|=K}P_{\WOR}(S)\,
\E\Big[\big(9K^2+(n+1+\tau)^2\big)W_S^2\Big]\\
&\le\;2R_{\max}^2L^2\max_{|S|=K}
\E\Big[\big(9K^2+(n+1+\tau)^2\big)W_S^2\Big].
\end{align*}
Each expectation on the right is finite when $n\ge2K$: apply
Lemma~\ref{lem:weightedmoment} with $a=n+1$ for the $(n+1+\tau)^2$ part, and note
$9K^2\,\E[W_S^2]\le9K^2\,\E[(1+\tau)^2W_S^2]<\infty$ by the same lemma with $a=1$.
This proves the displayed bound and the proposition.
\end{proof}

\section{Second-moment simulation details (Remark~\ref{rmk:variance})}
\label{app:secondmoment}

We confirmed this numerically with the exact Gumbel-Top-$n$ sampler (near-uniform \emph{and} heavily skewed logits, $M=2n$). Two thresholds must be kept apart. The \emph{true value} $\E[W_S^2]$ is finite iff $n\ge2K$ (the boundary the figure and theorem rely on); but the plug-in Monte-Carlo \emph{estimate} of $\E[W_S^2]$ has a finite sampling variance only for $n\ge4K$, since $\operatorname{Var}(W_S^2)$ involves $\E[W_S^4]\sim\int_{0^+}\tau^{\,n-4K}\,d\tau$, which diverges for $n<4K$ (equivalently $W_S^2$ has tail index $(n{+}1)/(2K)$, which must exceed $2$).

So on the marginal cells $n=2K$ [e.g.\ $(n,K)=(4,2),(6,3),(8,4),(10,5)$] the value is finite yet consistent-but-heavy-tailed to estimate, so its running mean converges slowly and cannot be pinned to three significant figures.

What the run shows cleanly is the \emph{qualitative} separation: a finite $\E[W_S^2]\sim O(10\text{--}10^2)$ versus realized weights $1/\prod_{i\in S}q_i$ exceeding $10^{4}$ in the divergent $n<2K$ cells [e.g.\ $(4,3),(4,4),(6,4),(8,5)$]. Skew inflates the constant but never changes which side of $n=2K$ a cell lands on.

\section{Gradient-variance diagnostic: protocol and full table}
\label{app:variance}

This appendix holds the protocol and the full numbers behind the variance bullet of
\S\ref{sec:numerics}. It is a deliberately small sanity check---one family of random
reward/logit geometries, $K=2$ only---sized to confirm the qualitative regime
statements quoted there, not to establish a general variance ranking. ``HT'' below
means the $Q=96$ implementation of the theoretically unbiased ideal estimator; its
finite-$Q$ residual is negligible on these certified cells but is not identically zero.

\emph{Protocol.} $K=2$ throughout, pools $n\in\{4,6,8\}$ (draw ratios
$K/n\in\{\tfrac12,\tfrac13,\tfrac14\}$); five random reward/logit geometries per
configuration $\times$ three Monte-Carlo seeds; $4{,}000$ draws per cell; entries are
medians [IQR] across the $15$ cells. Arms follow the naming of \S\ref{sec:where}: PL
joint-score REINFORCE (unbiased), our threshold-conditioned HT (ideal form unbiased;
finite-$Q$ implementation used), and the shared-value score surrogate (biased). Both baselines target the same quantity
$\JW$: the oracle uses its exact value (enumerable-only), the practical
EMA estimates it online---so the two panels differ only in oracle vs.\ estimated
baseline, and they agree.

\emph{Cost normalization.} Entries are the single-draw gradient variance
$\operatorname{tr}\operatorname{Cov}$ \emph{times reward-evaluations per draw}: the
joint-score arm consumes a fresh size-$K$ draw ($K$ reward evaluations), the pool
arms a size-$n$ pool ($n$ evaluations). This is the appropriate unit when reward
evaluation dominates the step cost, the regime the estimator targets
(\S\ref{sec:discussion}); it deliberately excludes estimator wall-clock, which at
these sizes runs \emph{against} the HT arm: $0.8$--$1.9$~ms/draw (the DP) vs.\
$\approx0.1$~ms for the other arms.

\emph{Tails.} At $K/n{=}\tfrac12$ the HT arm is heavy-tailed (median max single-draw
gradient norm $13.5$ vs.\ $1.9$ for joint-score; $95$th percentiles comparable), as
Remark~\ref{rmk:variance} predicts at the finite-variance boundary.

Reproduction: \texttt{anc/src/experiments/variance\_comparison.py} ($\approx$20--40
min single-core; prints per-cell bias, variance, tail, and ms/draw diagnostics).

\begin{table}[!tbp]
\centering
\caption{\textbf{Finite-model gradient-estimator diagnostic, cost-normalized} ($K=2$;
median [IQR] across the $15$ cells; protocol, cost-normalization caveat, and tail
diagnostics in the text of this appendix).}
\label{tab:variance}
\begin{tabular}{lccc}
\toprule
& \multicolumn{3}{c}{cost-normalized gradient variance at $K/n=$}\\
\cmidrule(l){2-4}
estimator & $0.50$ & $0.33$ & $0.25$ \\
\midrule
\multicolumn{4}{l}{\emph{EMA (practical) baseline, tracking $\JW$}}\\
PL joint-score REINFORCE (unbiased) & $0.33$ {\scriptsize$[0.26,0.34]$} & $1.49$ {\scriptsize$[0.63,1.90]$} & $0.68$ {\scriptsize$[0.33,0.70]$} \\
HT, threshold-conditioned (ours)$^\ddagger$ & $1.88$ {\scriptsize$[0.26,2.83]$} & $0.66$ {\scriptsize$[0.55,1.95]$} & $0.43$ {\scriptsize$[0.23,1.13]$} \\
shared-value score \emph{(biased)} & $0.26^\dagger$ & $0.96^\dagger$ & $0.81^\dagger$ \\
\midrule
\multicolumn{4}{l}{\emph{Oracle baseline $\JW$ (enumerable-only)}}\\
PL joint-score REINFORCE (unbiased) & $0.32$ {\scriptsize$[0.26,0.33]$} & $1.45$ {\scriptsize$[0.61,1.88]$} & $0.67$ {\scriptsize$[0.33,0.68]$} \\
HT, threshold-conditioned (ours)$^\ddagger$ & $1.81$ {\scriptsize$[0.24,2.79]$} & $0.60$ {\scriptsize$[0.33,1.57]$} & $0.31$ {\scriptsize$[0.09,0.96]$} \\
shared-value score \emph{(biased)} & $0.22^\dagger$ & $0.62^\dagger$ & $0.77^\dagger$ \\
\bottomrule
\end{tabular}
\\[3pt]
{\footnotesize $^\dagger$ low variance around a \emph{wrong} gradient (median relative bias $0.6$--$1.2$), so not comparable. $^\ddagger$Ideal estimator unbiased; table uses the certified $Q=96$ approximation.}
\end{table}

\section{Illustrative bias decomposition (Figure~\ref{fig:bias})}
\label{app:biasmap}

This appendix holds the illustrative $(K/n,\text{heterogeneity})$ sweep referenced
in \S\ref{sec:where}. It is one toy geometry ($M=8$, $n=6$), retained as an
illustration of \emph{how} the naive i.i.d.\ weights fail under coupling, not as a
decision rule (\S\ref{sec:discussion}); the analytical proof of the failure is the
closed-form instance of Observation~\ref{obs:coupling}.

Figure~\ref{fig:bias} decomposes the naive iid-SubLOO error into three panels.
\emph{Direction} (panel a): the naive weights stay aligned with the true gradient
everywhere (cosine $\ge0.926$). \emph{Magnitude} (panel b): they are substantially
off across most of the plane (relative $L_2$ median $\approx0.82$, with $60\%$ of
the $65$ cells exceeding $0.5$) and overshoot hardest where the draw is
near-exhaustive ($K/n\to1$) \emph{and} the policy is skewed. \emph{Absolute error}
(panel c) stays small even in that corner: the relative-$L_2$ values up to $\sim75$
there arise because $\|\nabla_\theta\JW\|\to0$ (hatched, denominator-unstable
cells), not because the naive step is absolutely large. The corrected
estimator~($\star$) restores the \emph{full} gradient (direction and magnitude) and
is unbiased everywhere on the map; in this toy geometry the dominant naive-weight
error happens to be magnitude, but on other substrates the directional distortion
can be larger.

\begin{figure}[!tbp]
\centering
\includegraphics[width=0.98\linewidth]{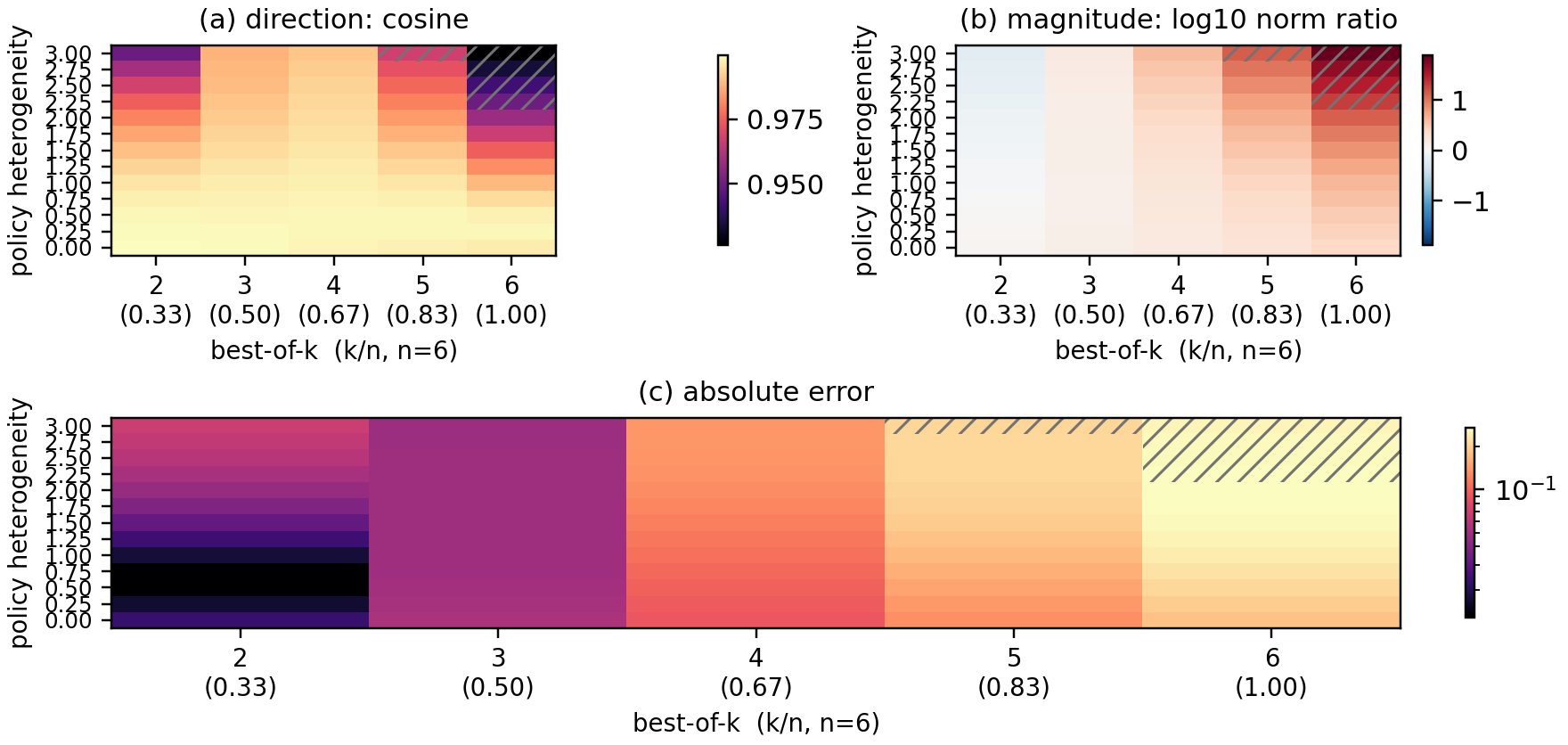}
\caption{\textbf{Illustrative bias decomposition} ($M=8,n=6$, one geometry).
For naive iid-SubLOO, \textbf{(a)} direction remains close
($\cos\ge0.926$), \textbf{(b)} magnitude overshoots near the
exhaustive/skewed corner, and \textbf{(c)} absolute error remains at most $0.6$.
Hatching marks denominator-unstable cells with
$\|\nabla_\theta\JW\|$ below $10\%$ of its map maximum. This is not a general regime
map: the ideal estimator~($\star$) is unbiased throughout, while
Remark~\ref{rmk:variance} identifies the term-level variance boundary.}
\label{fig:bias}
\end{figure}

\section{Implementation conventions: threshold detach and numerical modes}
\label{app:implementation}

\paragraph{The threshold must be stop-gradiented.}
A Gumbel-Top-$n$ sampler returns $\kappa=G_{(n+1)}=\phi_{(n+1)}+\Gumbel$, which
carries a gradient to $\theta$. The public loss internally stop-gradients it
(\texttt{kappa = kappa.detach()}), so autograd implements the fixed-threshold
score-function decomposition of Proposition~\ref{prop:gradient}; with finite
quadrature it applies that decomposition to $\JhatQ$, the numerical approximation
of \S\ref{sec:tractable}, rather than to the ideal $\Jhat$. A \emph{raw,
undetached} construction differs: autograd then adds an extra
pathwise-through-threshold term absent from the proof. A test in the ancillary
suite (\texttt{anc/}) pins both facts: the raw undetached gradient differs from the
detached one, and the public loss reproduces the detached gradient regardless of
whether $\kappa$ arrives with grad.

\paragraph{Strict vs.\ defensive numerics.}
The implementation exposes two numerical modes. The scientific default is
\emph{strict}: it has no score clamps or quadrature-node truncation and raises an
error rather than biasing. An opt-in \emph{defensive} mode can clamp or truncate
flagged negligible-density regimes and is explicitly biased when those guards
engage (the $1$-D integral itself remains finite-$Q$ quadrature in both modes). On
every validated configuration the guards do not engage, so the two modes coincide
there (checked by a dedicated test).

\bibliographystyle{plainnat}
\bibliography{references}

\end{document}